\documentclass{article} 
\usepackage{iclr2026_conference,times}

\usepackage{amsmath,amsfonts,bm}

\def\eqref#1{equation~\ref{#1}}

\def\1{\bm{1}}

\DeclareMathAlphabet{\mathsfit}{\encodingdefault}{\sfdefault}{m}{sl}
\SetMathAlphabet{\mathsfit}{bold}{\encodingdefault}{\sfdefault}{bx}{n}

\newcommand{\R}{\mathbb{R}}

\newcommand{\KL}{D_{\mathrm{KL}}}

\usepackage{url}

\usepackage[utf8]{inputenc} 
\usepackage[T1]{fontenc}    
\usepackage{hyperref}       
\usepackage{booktabs}       
\usepackage{amsfonts}       
\usepackage{nicefrac}       
\usepackage{microtype}      
\usepackage{xcolor}         
\usepackage{amsmath}
\usepackage{amssymb}
\usepackage{amsthm}
\theoremstyle{plain}
\usepackage{arydshln}

\usepackage{multirow} 
\usepackage{multicol} 
\usepackage{extarrows}
\usepackage{subfigure} 
\usepackage{float} 

\usepackage{algorithm}
\usepackage[noend]{algpseudocode}

\usepackage{wrapfig}
\usepackage{graphicx}

\newtheorem{theorem}{Theorem}[section]

\newtheorem{lemma}[theorem]{Lemma}

\newtheorem{definition}[theorem]{Definition}

\newtheorem{remark}[theorem]{Remark}

\usepackage{colortbl}
\usepackage{verbatim}

\def\D{{\mathcal{D}}}
\def\F{{\mathcal{F}}}

\def\R{{\mathcal{R}}}
\def\H{{\mathcal{H}}}
\def\X{{\mathcal{X}}}
\def\Y{{\mathcal{Y}}}
\def\N{{\mathcal{N}}}
\def\A{{\mathcal{A}}}
\def\G{{\mathcal{G}}}
\def\L{{\mathcal{L}}}
\def\Z{{\mathcal{Z}}}
\def\P{{\mathcal{P}}}

\def\mi{{\rm{mi}}}
\def\KL{{\rm{KL}}}

\def\blue{\color{blue}}

\definecolor{mydarkblue}{rgb}{0,0.08,0.45}
\definecolor{mydarkgreen}{RGB}{0, 139, 69}

\hypersetup{
	colorlinks=true,
	urlcolor=magenta,
	citecolor=mydarkblue,
}

\title{Why Do Unlearnable Examples Work: A Novel Perspective of Mutual Information}

\author{
    Yifan Zhu\textsuperscript{\rm 1, 2}\thanks{These authors contributed equally to this work.}~~,
    Yibo Miao\textsuperscript{\rm 1, 2}$^{\thefootnote \dagger}$,
    Yinpeng Dong \textsuperscript{\rm 3 4},
    Xiao-Shan Gao\textsuperscript{\rm 1, 2}\thanks{Corresponding author.}\\
\textsuperscript{\rm 1}State Key Laboratory of Mathematical Sciences,  Academy of Mathematics and Systems Science, \\
Chinese Academy of Sciences\\
\textsuperscript{\rm 2}University of Chinese Academy of Sciences\\
\textsuperscript{\rm 3}College of AI, Tsinghua University \\
\textsuperscript{\rm 4}Shanghai Qi Zhi Institute \\
\texttt{
zhuyifan@amss.ac.cn,
miaoyibo@amss.ac.cn, }\\
\texttt{
dongyinpeng@mail.tsinghua.edu.cn,
xgao@mmrc.iss.ac.cn} 
}

\iclrfinalcopy

\begin{document}

\maketitle
\begin{abstract}

The volume of freely scraped data on the Internet has driven the tremendous success of deep learning. Along with this comes the growing concern about data privacy and security. 
Numerous methods for generating unlearnable examples have been proposed to prevent data from being illicitly learned by unauthorized deep models by impeding generalization. However, the existing approaches primarily rely on empirical heuristics, making it challenging to enhance unlearnable examples with solid explanations.
In this paper, we analyze and improve unlearnable examples from a novel perspective: \emph{mutual information reduction}. 
We demonstrate that effective unlearnable examples always decrease mutual information between clean features and poisoned features, and when the network gets deeper, the unlearnability goes better together with lower mutual information.
%
%
Further, we prove from a covariance reduction perspective that minimizing the conditional covariance of intra-class poisoned features reduces the mutual information between distributions. 
Based on the theoretical results, we propose a novel unlearnable method called \textbf{Mutual Information Unlearnable Examples (MI-UE)} that reduces covariance by maximizing the cosine similarity among intra-class features, thus impeding the generalization effectively. 
Extensive experiments demonstrate that our approach significantly outperforms the previous methods, even under defense mechanisms.
\end{abstract}


\section{Introduction}

{\blue 

}

Deep neural networks (DNNs) have achieved unprecedented performance across various fields in the past decade~\citep{lecun2015deep}.
This progress is largely driven by the availability of large-scale datasets freely scraped from the Internet, such as ImageNet~\citep{deng2009imagenet} and LAION-5B~\citep{schuhmann2022laion}, which continue to advance state-of-the-art deep models~\citep{achiam2023gpt,liu2024sora}. 
However, a concerning fact is that some of this data collection occurs without authorization~\citep{birhane2021large}. 
Users may be reluctant to contribute their privacy-sensitive data, such as face images and medical reports, to training large-scale commercial models~\citep{achiam2023gpt,team2023gemini,liu2024sora}. 
Indeed, according to a report~\citep{hill2020the}, a tech company illicitly acquired over three billion facial images to develop a commercial facial recognition model. 
Other investigations revealed an increasing number of lawsuits between data owners and machine learning companies~\citep{vincent2019google,burt2020facial,conklin2020facebook,dunn2024openai}.
Consequently, there is a growing emphasis on safeguarding data from unauthorized use for training.

Tremendous efforts have been made to craft unlearnable examples (UEs) in order to prevent data from being illicitly learned by unauthorized deep models~\citep{feng2019learning, huang2020unlearnable,fowl2021adversarial,sandoval2022autoregressive,zhu2024toward}.
These methods add elaborate and imperceptible perturbations into the training data so that the model cannot learn meaningful information from the data and thereby significantly degrading the test accuracy. 
A representative method in this domain is the error-minimization poisoning approach~\citep{huang2020unlearnable}, which employs iterative optimization of a bi-level min-min problem to create poisoning noise. 
The underlying intuition is that smaller losses trick the model into believing there is nothing to learn from the dataset.
Recently, several methods~\citep{fowl2021adversarial,yuan2021neural,sandoval2022autoregressive,liu2024game} have been developed to further enhance the effectiveness of UEs.


However, existing methods primarily rely on empirical heuristics, lacking of convincing explanations. 
Some studies~\citep{yu2022availability,zhu2024detection} interpret UEs as attempts to create linear shortcuts that yield linear separability, leading models to overwhelmingly depend on spurious features.
However, we find that linear classifiers trained on UEs achieve fair generalization (over 30\% test accuracy on CIFAR-10), suggesting that interpreting UEs merely as linear shortcuts does not fully explain their much worse generalization in deep neural networks (10\% on CIFAR-10, equivalent to random guessing levels).
Besides, not all UEs are linearly separable, such as autoregressive poisons~\citep{sandoval2022autoregressive,sandoval2024can}.
Therefore, the prevailing explanation regarding the linear separability of UEs is incomplete in its applicability. 
There are still unclear of why UEs are effective, posing significant challenges to further enhancing UEs with better principled approaches.


Unlearnable examples contain elaborately injected poisons and are therefore out-of-distribution.
Recently,  mutual information (MI) in representation learning,   which quantifies the degree of correlation between random variables from two distributions,   has gained widespread attention~\citep{oord2018representation,   chen2020simple}. This inspires us to use MI as a surrogate metric to evaluate the unlearnability of UE poisoned datasets.
Specifically, we introduce a novel perspective: the reduction of mutual information, to elucidate the underlying mechanism of UEs. We evaluate both test accuracy drop and MI reduction between clean and poisoned features across many UEs, showcasing that effective UEs consistently decrease MI with clean features. 
Beyond exploring different UEs, we also test the decrease of MI across networks of varying depths. As networks become deeper, MI between features becomes smaller, resulting in test accuracy becomes lower. This consistent relationship between MI reduction and accuracy drop demonstrate our findings.

Based on these analyses, we further enhance the efficacy of UEs by directly decreasing MI between the poisoned and clean distributions in the feature space. However, the complexity of estimating MI poses significant challenges for optimization~\citep{paninski2003estimation,   mcallester2020formal}.
To tackle this issue, we prove from the perspective of covariance reduction that reducing MI can be achieved by minimizing the conditional covariance of the poisoned data's intra-class features, and we then introduce a novel poisoning method called \textbf{Mutual Information Unlearnable Examples (MI-UE)}. 
Specifically, MI-UE optimizes a mutual information reduction loss that maximizes the cosine similarity among intra-class features for covariance reduction, while minimizing the cosine similarity between inter-class features to prevent class collapse.
We conduct extensive experiments to validate that our MI-UE significantly outperforms the previous state-of-the-art UEs in reducing the model’s generalization ability. 
Remarkably, even under defenses such as adversarial training, MI-UE still achieves superior poisoning effects.


\vspace{-5pt}
\section{Related Work}
\vspace{-5pt}

\textbf{Unlearnable examples.}
Privacy issues have received extensive attention in privacy-preserving machine learning \citep{shokri2015privacy, abadi2016deep, shokri2017membership}, including studies on UEs \citep{huang2020unlearnable}. 
Unlearnable examples are a type of data poisoning that allows an attacker to perturb the training dataset under a small norm restriction.
These attacks aim to induce errors during test-time while maintaining the semantic integrity and ensuring the usage by legitimate users.
A representative method in this field is the error-minimization poisoning strategy~\citep{huang2020unlearnable}, which employs iterative optimization of a bi-level minimization problem to generate poisoning noise by minimizing the loss. 
Following the initial work~\citep{huang2020unlearnable}, additional UE strategies \citep{yuan2021neural,yu2022availability,sandoval2022autoregressive,fu2022robust, ren2022transferable, liu2024stable, liu2024game, zhu2024toward} have also been proposed.
However, existing methods predominantly rely on empirical heuristics and lack a convincing framework to explain the efficacy of UEs, posing significant challenges for advancing UEs in a principled manner.

\textbf{Mutual information in machine learning.}
Mutual Information (MI)~\citep{shannon1948mathematical} serves as a metric for quantifying the dependency between two random variables  and has been widely applied in machine learning \citep{bell1995information,  butte1999mutual,   alemi2016deep,   gabrie2018entropy}.
Recently,  MI maximization in representation learning has gained widespread attention~\citep{oord2018representation,   chen2020simple}.
People maximize MI to improve domain adaption~\citep{zhao2021domain},  and have achieved domain generalization by conducting MI regularization with pre-trained models~\citep{cha2022domain}.
People also have tried to maximize the natural MI and minimize the adversarial MI to enhance adversarial robustness~\citep{zhou2022improving}.
Additionally, MI has been utilized in disentangled representation learning~\citep{chen2018isolating}, cascaded learning~\citep{zhang2021rgb} and fairness~\citep{zhu2021learning}. 
A recent work~\citep{wang2025reversible} generates UE by minimizing MI between model inputs and outputs. 
Unlike these approaches, we use MI between clean and poisoned features to establish a generalization upper bound when trained on UEs.

\textbf{Mutual information estimation.}
Despite the broad application of MI,   precise computing or approximating it remains a challenging task~\citep{paninski2003estimation,   mcallester2020formal}. 
Due to exponential growth in sample complexity, 
traditional approximation methods based on histogram~\citep{pizer1987adaptive,  moddemeijer1989estimation},   kernel density~\citep{moon1995estimation},   $k$-th nearest 
neighborhood~\citep{kraskov2004estimating} struggle in high-dimensional data contexts.
Some methods \cite{goldfeld2021sliced,  goldfeld2022k,  tsur2024max} have used sliced mutual information as a surrogate metric in high-dimensional case. 
Advanced estimation methods based on deep learning,  such as mutual information neural estimator~\citep{belghazi2018mutual},   copula density estimation~\citep{letizia2022copula},  diffusion-based estimation~\citep{franzese2023minde}, have been proposed. 
\cite{chen2016infogan,  oord2018representation,   chen2020simple} have introduced lower bound estimation of MI,   while another work~\citep{cheng2020club} has proposed an upper bound estimation.
However, the complexity of MI estimation still poses significant challenges to optimizing existing approximation methods.

\section{Preliminary and Motivation}
\textbf{Notations.}
We denote the data distribution as $\D$, and let $(X,  Y)\sim \D={\D_{\X} \times \D_{\Y}}$ represent the random variables of data instances and their corresponding labels.
Consider a classification model $f=h\circ g$, where $g$ is a feature extractor and $h$ is a linear classifier. The feature is $Z=g(X)$.

\textbf{Mutual information.}
Mutual Information (MI)~\citep{shannon1948mathematical} serves as a metric for quantifying the dependency between two random variables.
Let $X_1$ and $X_2$ be random variables from domains $\X_1$ and $\X_2$, respectively,   with marginal probability measures $P_{X_1}$ and $P_{X_2}$, and joint probability measures $P_{X_1,  X_2}$.
MI measures the discrepancy between $P_{X_1,  X_2}$ and  $P_{X_1}\times P_{X_2}$:
\begin{align}
\!\!\!\!I(X_1,  X_2) = \int_{\X_1\times \X_2} \log\left(\frac{dP_{X_1,  X_2}}{d(P_{X_1} \times P_{X_2})}\right)dP_{X_1,  X_2}.  \nonumber
\end{align}

\textbf{Unlearnable examples.}
Unlearnable example (UE) is a type of clean-label data poisoning attack, which allows the poison generator to perturb all training data with a small budget~\citep{huang2020unlearnable,feng2019learning,yuan2021neural,yu2022availability,sandoval2022autoregressive}.
Specifically, the attacker can perturb the training dataset $D=\{(x_i,  y_i)\}_{i=1}^N$ into a poisoned version $D'=\{(x_i+\delta_i,  y_i)\}_{i=1}^N$,
while controlling the $p$-norm of perturbation $\|\delta_i\|_p\leq\epsilon$ to maintain the imperceptibility of poisons.
The goal of UEs is to reduce the model's generalization, i.e., degrade the test accuracy, to prevent privacy data from malicious abuse.

\textbf{Motivation: why do UEs work?}
\citet{yu2022availability} has found that UE noises are typically linearly separable, and \citet{zhu2024detection} has further proved that some unlearnable poisoned datasets possess linear separability with high probability when the data dimension is large, and use simple networks to detect potential unlearnable datasets.
These studies interpret UEs as attempting to create linear shortcuts for recognition that result in linear separability,   leading models to rely overwhelmingly on spurious features for predictions rather than capturing the core features of the images.
However, we find that linear classifiers trained on UEs can achieve certain generalization (over 30\% test accuracy on CIFAR-10, see Figure \ref{fig:simple-mi} 
and Table \ref{tab-smi-linear}). 
Thus, interpreting UEs merely as linear shortcuts does not fully account for why such examples result in much worse generalization in deep neural networks (as low as 10\% on CIFAR-10, which is equivalent to random guessing levels, see Table \ref{tab-smi-resnet18}).
Besides, previous work~\citep{sandoval2024can} has discovered that not all UEs are linearly separable. For instance,   the linear separability of autoregressive poisons~\citep{sandoval2022autoregressive} is even lower than that of clean images on CIFAR-10.
We have also evaluated the linear separability of both unlearnable noise and unlearnable datasets in Appendix \ref{app:linear-sep}. It demonstrates that although many of existing unlearnable examples exhibit linear separability, some methods like AP and AR have poor linear separability close to that of clean data.
Thus,   the existing explanation based on the linear separability of UEs is incomplete in its applicability.
To address these issues and improve the effectiveness of UEs,   we propose a novel perspective: {\em the reduction of mutual information},   to explain the core reason behind the poisoning effect of UEs.

\section{A Novel Perspective: Mutual Information Reduction}
\label{sec-4}


\begin{table*}[t]
\caption{
The Mutual Information (MI) estimation between clean and poison features on the histogram-based estimator and test accuracy for different UEs on CIFAR-10, along with their gaps from the results for clean data.
Compared to random noises, MI for all UEs are significantly reduced.} 
\small
\label{tab-smi-resnet18}
\centering
\setlength{\tabcolsep}{2pt}
\begin{tabular}{cccccccccccc}
 \toprule
 Victim ResNet-18 & Clean & Random & EM & AP & NTGA & AR & REM &  SEM & GUE & TUE &  MI-UE \\
\midrule
Test Acc(\%) & 94.45 & 94.11 & 24.17 & 11.21 & 23.11 & 17.41 & 22.94 & 14.78 & 12.04 & 11.25 & 9.95 \\
Acc Gap(\%) & - & 0.34 & 70.28 & 83.24 & 71.34 & 77.04 & 71.51  & 79.67 & 82.41 & 83.20 & \textbf{84.50}\\
MI & 0.7122 & 0.6747 & 0.6400 & 0.5871 & 0.6126 & 0.5622   & 0.6290 & 0.5747 & 0.5895 & 0.6094 & 0.4969 \\
MI Gap & - & 0.0375 & 0.0722 & 0.1251 & 0.0996 & 0.1500 & 0.0832 & 0.1375 & 0.1227 & 0.1028 & \textbf{0.2153}\\
\bottomrule
\end{tabular}
\end{table*}

\begin{figure}[t]
    \centering
    \subfigure[]{
    \label{fig:his-kde-mi}
\includegraphics[width=0.3\textwidth]{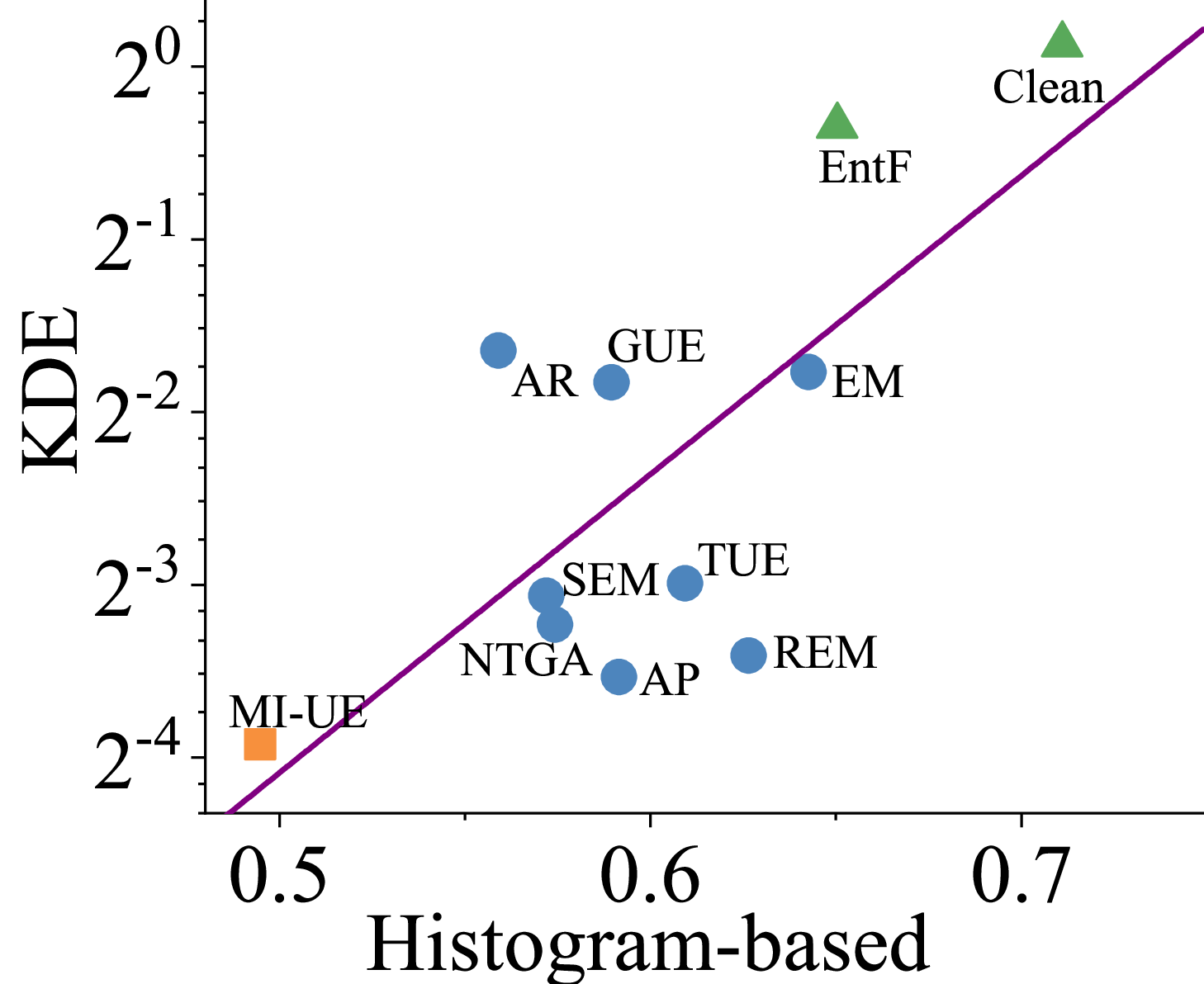}
    }
    \subfigure[]{ \label{fig:his-knn-mi}
\includegraphics[width=0.3\textwidth]{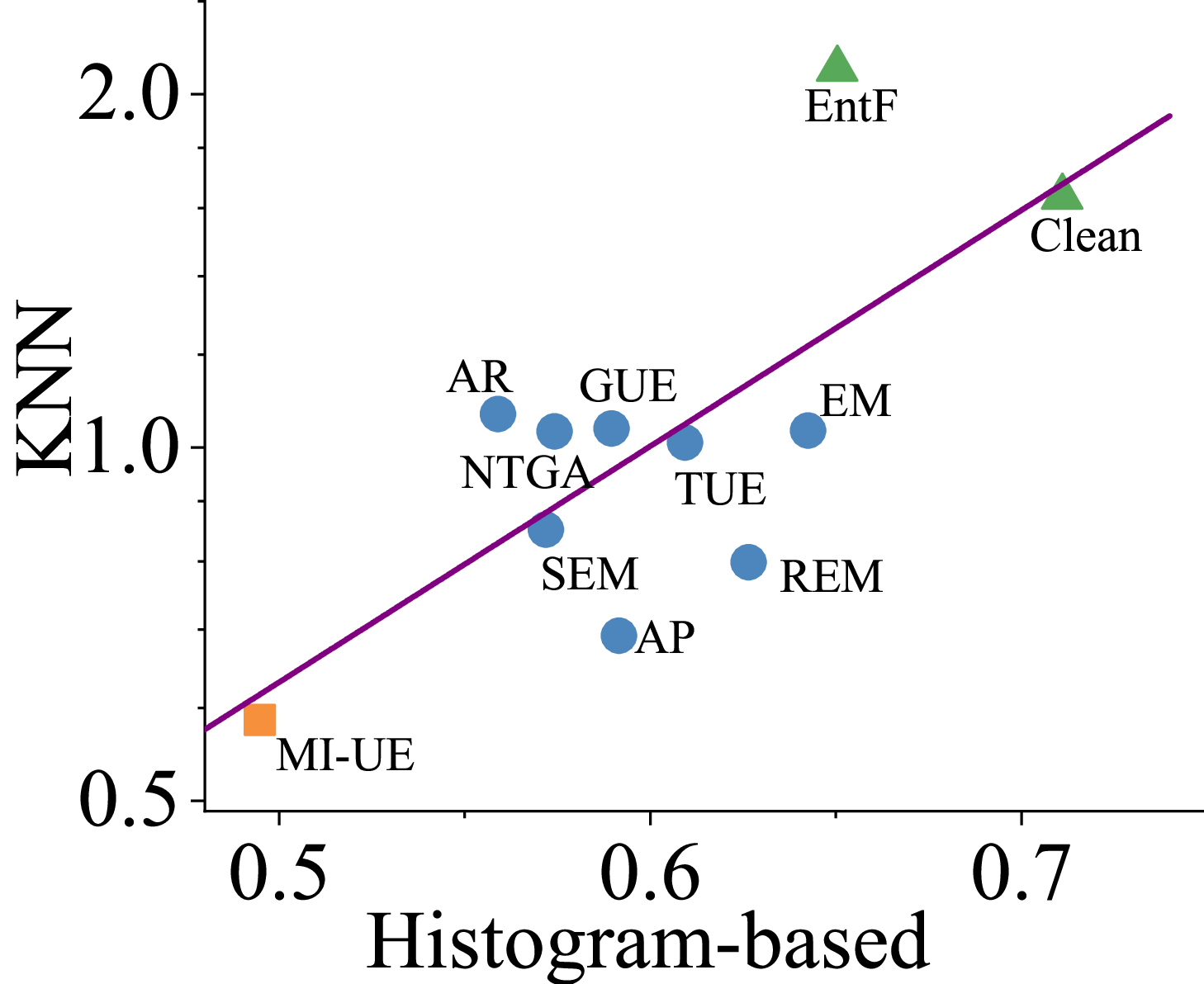}}
        \subfigure[]{
    \label{fig:his-mine-mi}
\includegraphics[width=0.3\textwidth]{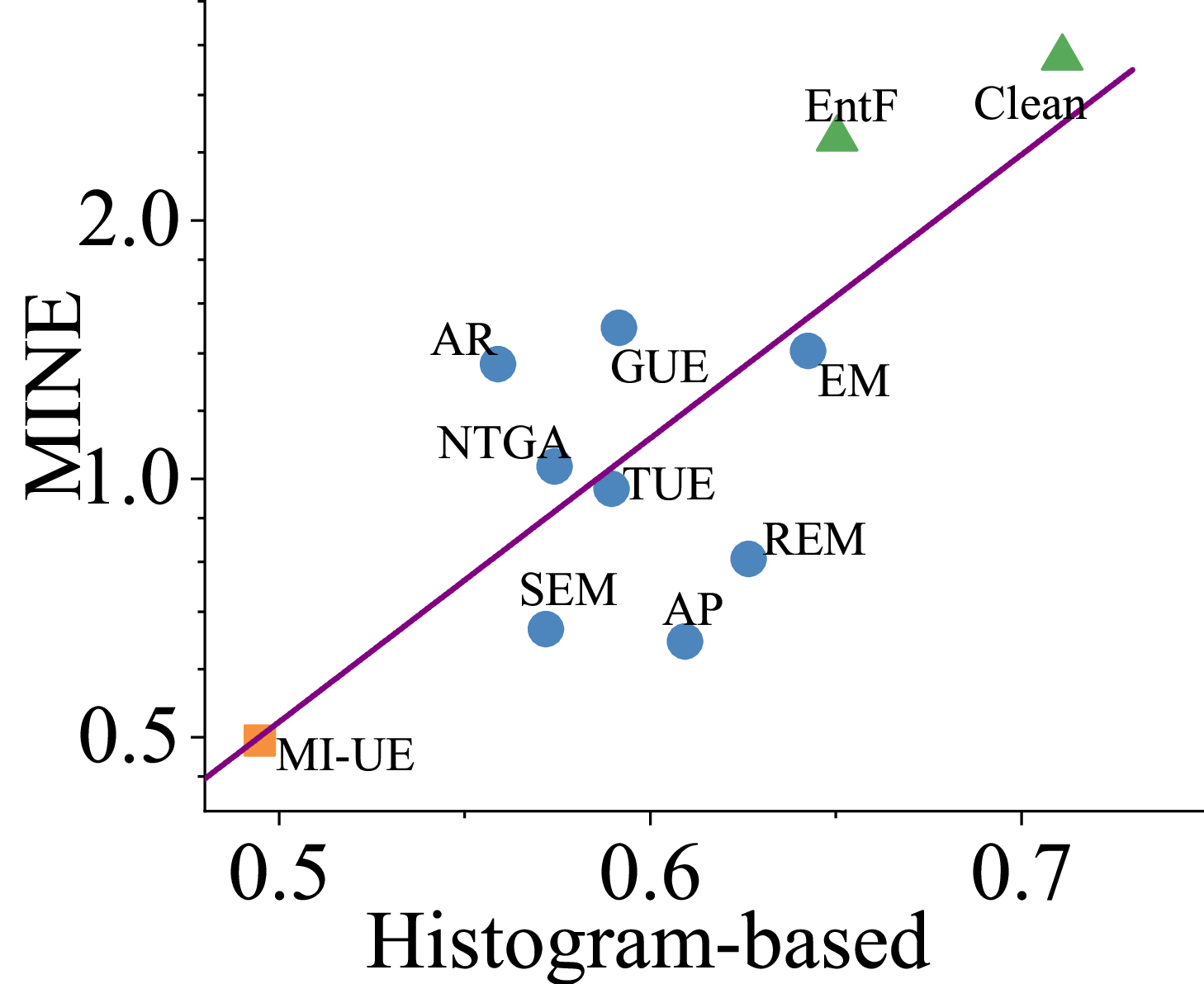}
    }

    \caption{  The estimation of MI between clean and unlearnable poisoned features on different MI estimators.
    \textbf{(a):} MI metrics under histogram-based estimator and kernel density estimator (KDE).
    \textbf{(b):} MI metrics under histogram-based estimator and k-NN estimator.
    \textbf{(c):} MI metrics under histogram-based method and mutual information neural estimator (MINE).
    Green triangles represent clean or ineffective UEs,   blue circles mean existing effective UEs,  orange square denotes our MI-UE. It demonstrates that although different estimation methods show different quantitative results,   the effectiveness of UEs is always positively related with the MI between clean and poisoned features.}
        \label{fig:his-kde-knn-mi}
\end{figure}

Unlearnable examples, whihc contain elaborately injected poisons, do not belong to the clean data distribution and violate the i.i.d. assumption, posing significant challenges for analyzing their generalization power.
Recently,  MI in representation learning,   which quantifies the degree of correlation between random variables from two distributions,   has gained widespread attention~\citep{oord2018representation,   chen2020simple}. This inspires us to use MI as a surrogate metric to evaluate the unlearnability.

\textbf{Effective UEs have MI reduction.}
Existing studies have empirically constructed UEs from various perspectives,   such as deceiving models into perceiving no learnable content~\citep{huang2020unlearnable},   creating shortcuts~\citep{yu2022availability},   injecting non-robust features~\citep{fowl2021adversarial},   and fooling simple CNNs through autoregressive signals~\citep{sandoval2022autoregressive}. However,   our findings indicate that these empirical methods indeed exhibit the reduction of MI between clean features $g(X)$ and poisoned features $g(X')$.
We conduct quantitative experiments to measure the changes of MI across different UEs. 
To facilitate better estimation,   we employ sliced mutual information (SMI) \citep{goldfeld2021sliced} to mitigate the challenges on MI estimation for high-dimensional data, and we utilize histogram-based estimator~\citep{moddemeijer1989estimation},  
for one-dimensional MI estimation.
Table \ref{tab-smi-resnet18} shows the test accuracy and MI estimation on histogram-based estimator for different UEs, along with their gaps from the results on clean data.
We have evaluated the Spearman correlation between Acc gap and MI gap across different unlearnable examples shown in Table \ref{tab-smi-resnet18}, the correlation score {\blue of} 0.7818, demonstrating decent positive correlation.
The results in Table \ref{tab-smi-resnet18} demonstrate a significant reduction of $I(g(X), g(X'))$ for all UEs compared to those perturbed randomly, supporting our findings that effective UEs have MI reduction.

Relying on a single MI estimator 
may be unconvincing, as estimating MI is not easy for high-dimensional data~\citep{paninski2003estimation,   mcallester2020formal}.
To further ensure the confidence of our MI estimation,   we also evaluate the estimation of feature MI using other approximation methods,   including kernel density estimator (KDE)~\citep{moon1995estimation},   $k$-NN estimator~\cite{kraskov2004estimating}, and 
mutual information neural estimator (MINE)~\citep{belghazi2018mutual}. 
Details for these estimation methods can be founded in Appendix \ref{app:est-mi}.
The relationship of them compared with Histogram-based estimator is displayed in Figure \ref{fig:his-kde-knn-mi}. For every estimator, we first conduct SMI estimator to convert the high dimensional features to one-dimensional ones.
Although different estimators show different estimated values,   the similar trends unfold for all of these estimators.
Specifically,   the clean dataset,   or ineffective UEs against standard training,   ENTF,   denoted by green triangles in Figure \ref{fig:his-kde-knn-mi},   have shown quite high MI.
Existing effective UEs,   denoted by blue circles,   have demonstrated relatively lower MI.
Our proposed MI-UE,   by directly minimizing feature's MI,   denoted by orange square,   has achieved lowest MI for all estimators.
Therefore, Figure \ref{fig:his-kde-knn-mi} further increases the confidence of our claim, UEs are caused by MI reduction.
Beyond these experimental foundations, we also provide an explanation on relationship between UEs and MI reduction from theoretical views based on some Gaussian assumptions in Appendix \ref{app:theory}.

\begin{figure*}[t]
    \centering
    \subfigure[EM]{
    \label{fig:ue-simple-mi}
\includegraphics[width=0.23\textwidth]{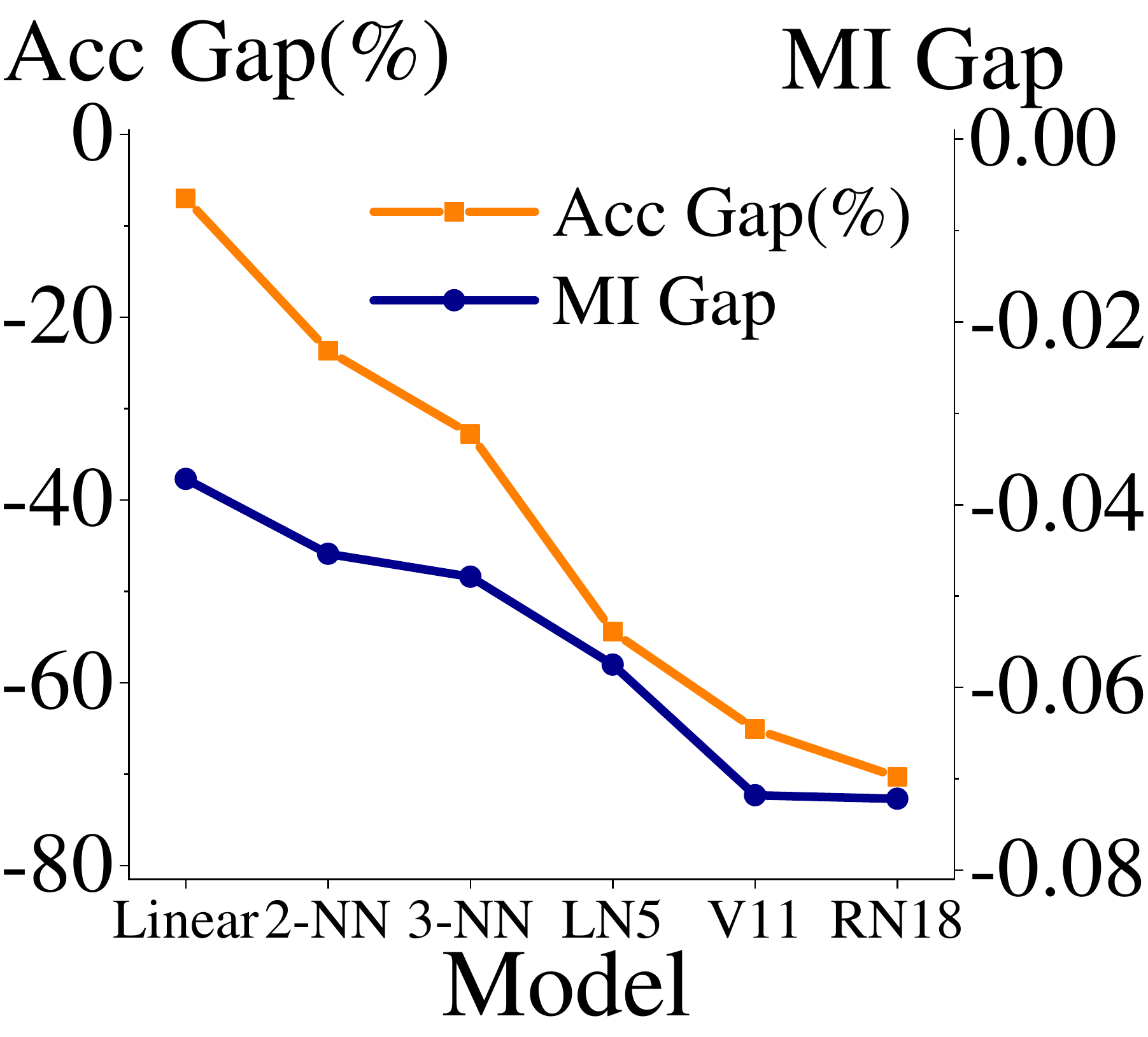}}
    \subfigure[REM]{
    \label{fig:rem-simple-mi}
\includegraphics[width=0.23\textwidth]{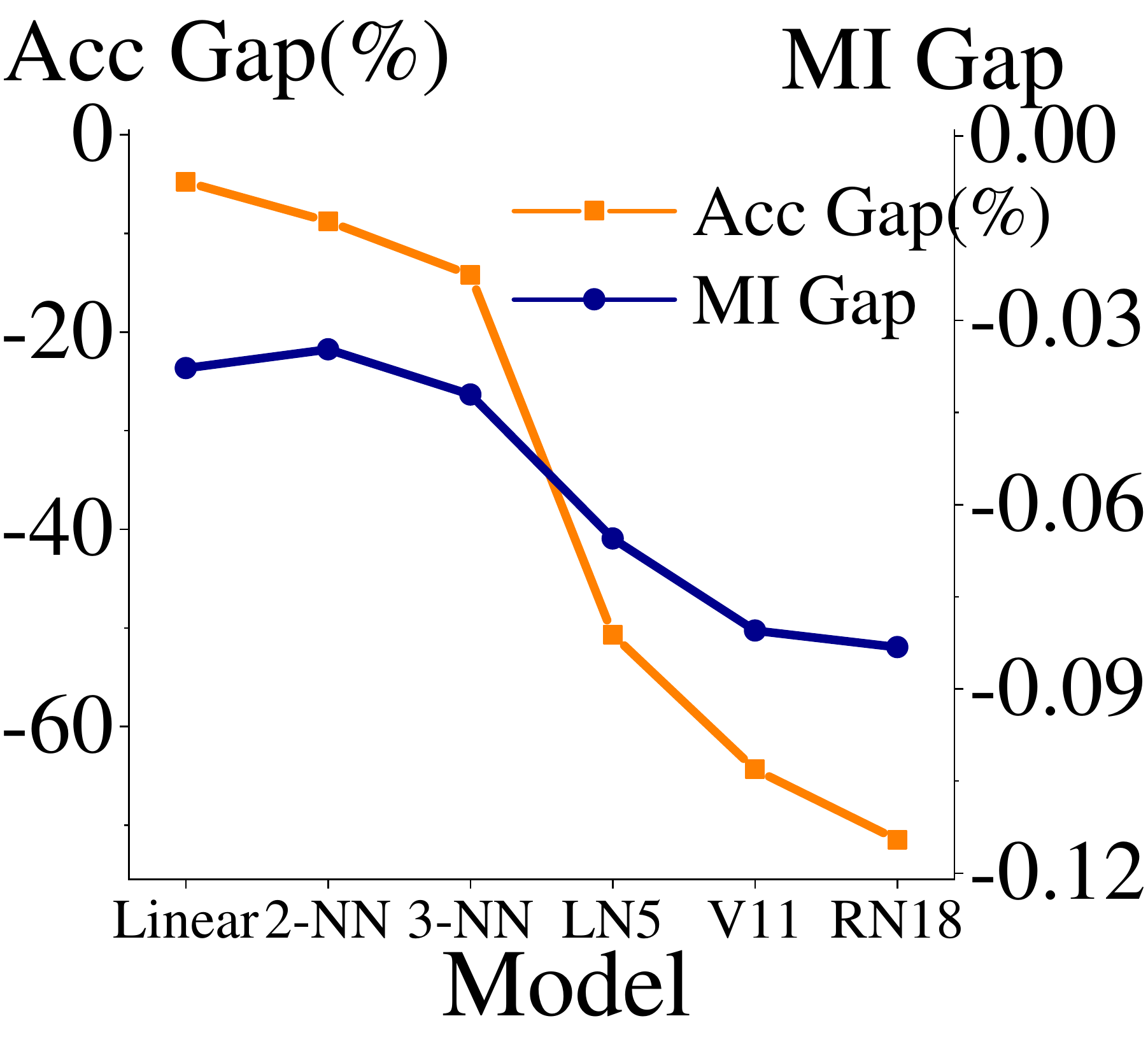}}
    \subfigure[GUE]{
    \label{fig:gue-simple-mi}
\includegraphics[width=0.23\textwidth]{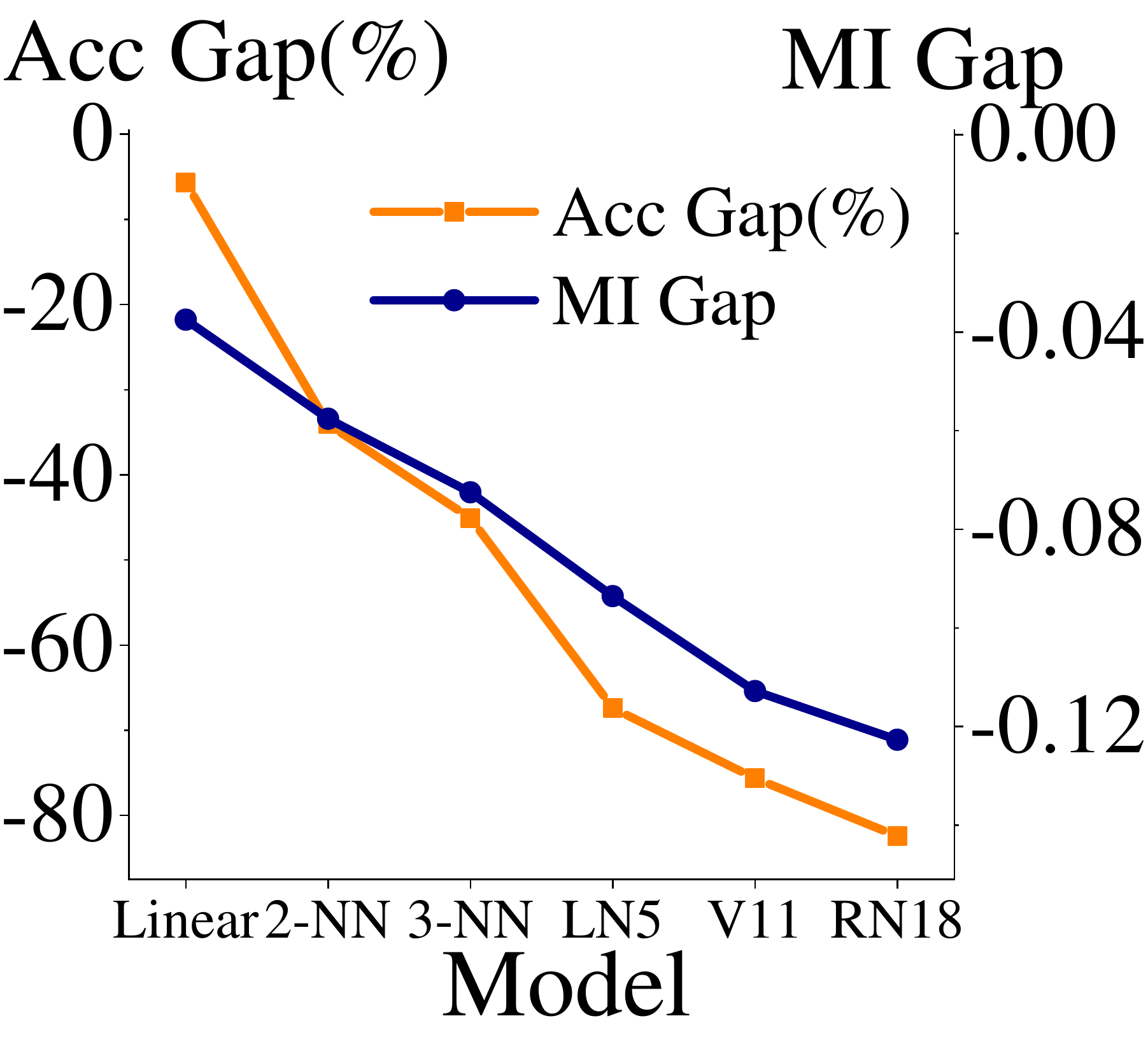}}
    \subfigure[MI-UE]{
    \label{fig:miue-simple-mi}
\includegraphics[width=0.23\textwidth]{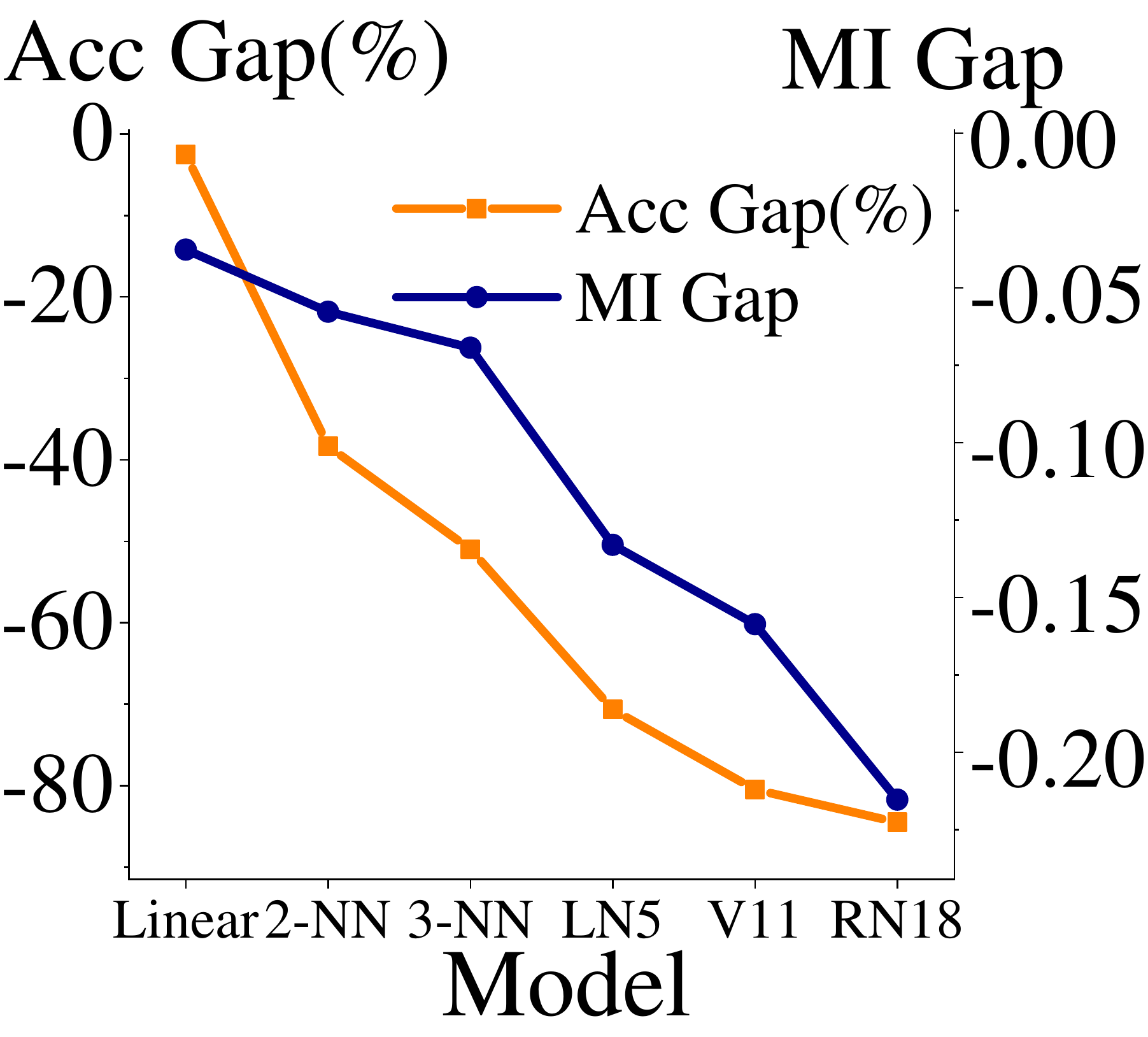}}

    \caption{The drop of test accuracy (Acc Gap) and the reduction of MI (MI Gap) of various UEs on different models, including linear, 2-NN, 3-NN, LeNet-5(LN5), VGG-11(V11), ResNet-18(RN18). 
    The results indicate that as the depth and complexity of the models increase, both the drop of test accuracy and the reduction of MI become more pronounced.}
        \label{fig:simple-mi}
\end{figure*}

\textbf{Shallower networks are less affected by UEs.}
In addition to exploring different UEs,   we conduct experiments to evaluate the decrease of MI across models of varying depths. 
Similar to ResNet-18 as shown in Table \ref{tab-smi-resnet18},   we assess shallower networks including linear model, 2-NN (two-layer neural network), 3-NN (three-layer neural network), LeNet-5~\citep{lecun1998gradient}, and VGG-11~\citep{simonyan2014very}.
As shown in Figure \ref{fig:simple-mi},   shallower networks (such as linear model and 2-NN) exhibit smaller reductions of MI (i.e., MI Gap), less drop in test accuracy (i.e.,   Acc Gap),  meaning less effects from UEs.
Although shallower networks perform poorly, they are less susceptible to UE attacks. Taking linear network as example, this can be attributed to the feature extractor $g$ being an identity mapping,   thus $f=h$,   which causes $I(g(X),   g(X'))$ to degenerate to $I(X,   X')$. Consequently,   the small norm constraints on perturbations $X'$ to $X$ severely limit the reduction of MI. 
In contrast, deeper network's feature extractors $g$ demonstrate superior performance,   and due to the error amplification effect~\citep{liao2018defense},   even norm-constrained perturbations do not limit changes of MI. Therefore,   as shown in Figure \ref{fig:simple-mi} and Table \ref{tab-smi-linear}, UEs have a more potent effect on deeper networks. These results on models of different depths further corroborate the validity of our MI reduction explanation.
To further validate the influence of network depth, we also evaluate  test accuracy and MI estimation on ResNet and ViT with different depths in Appendix \ref{app:diff-depths}.

\section{Achieving Unlearnability by Minimizing MI}
As analyzed in Section \ref{sec-4},   the reduction of MI is a primary factor behind the effectiveness of UEs. Therefore,  it is natural to consider directly decreasing MI to achieve stronger unlearnability. However,   the difficulty in optimizing MI poses significant challenges to the existing optimization methods \citep{treves1995upward, bach2002kernel,   paninski2003estimation}.
Previous works~\citep{mcallester2020formal,belghazi2018mutual} have noted that all these methods inherently suffer from severe statistical limitations, and have highlighted that optimizing MI using SGD is biased. 
To address this challenge,   we theoretically derive MI from the perspective of covariance reduction. Based on this analysis,   we propose a novel unlearnable method,   \textbf{Mutual Information Unlearnable Examples (MI-UE)},   to generate more effective UEs.

\subsection{Covariance Reduction Induces Low MI}
\label{sec-5.1}

To address the challenges of MI estimation,  we present a theorem that introduces a simple assumption on the feature distribution for each class and prove that minimizing the conditional covariance of intra-class features implicitly minimizes MI between distributions.

\begin{theorem}[Proof in Appendix \ref{app:th-proof}] 
\label{th-gauss}
Assume that for every $Y\in\Y$, poison distribution $g(X')|Y$ is close to a Gaussian mixture distribution under KL-divergence, i.e., there exists $\N(\mu_Y,  \Sigma_Y)$, such that $\KL(\N(\mu_Y,  \Sigma_Y)\| p(g(X')|Y))\leq \epsilon$ for some $0<\epsilon<1$.
Then, we have: 
\vspace{-3pt}
\begin{align}
\label{eq-gauss}
\!\! I(g(X),   g(X')) \leq \frac{d}{2} \log(2\pi e) + \frac{1}{2} \mathbb{E}_{Y} \log(\det{\Sigma_Y})+ H(g(X')|g(X)) + \mathbb{E}_{Y} C_Y\sqrt{\epsilon}, 
\end{align}
where $C_Y=\sqrt{2}\max\limits_{u\in[m_Y,M_Y]}|\log u|+1, m_Y=\min p(g(X')|Y), M_Y=\max p(g(X')|Y), I(\cdot,   \cdot)$ denotes the mutual information,    $H(\cdot| \cdot)$ denotes the conditional entropy,   and $d$ is the feature dimension. 
\end{theorem}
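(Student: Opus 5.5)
The plan is to combine the identity $I(Z,Z')=H(Z')-H(Z'|Z)$, with $Z=g(X)$ and $Z'=g(X')$, with a per-class entropy comparison against the Gaussians $\N(\mu_Y,\Sigma_Y)$ from the hypothesis. The Gaussian terms come from the entropy formula $H(\N(\mu_Y,\Sigma_Y))=\frac d2\log(2\pi e)+\frac12\log\det\Sigma_Y$. The term $\mathbb{E}_Y C_Y\sqrt\epsilon$ comes from the KL assumption via Pinsker. The main obstacle is matching the sign of $H(Z'|Z)$ as stated, and I expect it to fail for differential entropies (see the last paragraph).

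\textbf{Step 1 (per-class entropy comparison).} Fix $Y=y$, and write $p=p(Z'|y)$ and $q$ for the density of $\N(\mu_y,\Sigma_y)$. Then
\begin{align}
H(p)-H(q)=\KL(q\|p)+\int (q-p)\log p .\nonumber
\end{align}
On the support, $\log p$ takes values in $[\log m_y,\log M_y]$, so $|\log p|\le \max_{u\in[m_y,M_y]}|\log u|$. Hence $|\int (q-p)\log p|\le \|q-p\|_1\max|\log u|$. Pinsker gives $\|q-p\|_1\le\sqrt{2\KL(q\|p)}\le\sqrt{2\epsilon}$. Since $\epsilon<1$, also $\KL(q\|p)\le\epsilon\le\sqrt\epsilon$. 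Together these give
\begin{align}
H(Z'|Y=y)\le \tfrac d2\log(2\pi e)+\tfrac12\log\det\Sigma_y+C_y\sqrt\epsilon,\nonumber
\end{align}
which is exactly where $C_y=\sqrt2\max|\log u|+1$ comes from. Averaging over $Y$ yields the first, second and fourth terms of \eqref{eq-gauss}.

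\textbf{Step 2 (passing from $H(Z')$ to $\mathbb{E}_Y H(Z'|Y=y)$).} In general $H(Z')=H(Z'|Y)+I(Z';Y)$, so I would need to control $I(Z';Y)$. I would try to absorb it by working with the decomposition $I(Z;Z')\le I(Z;Z'|Y)+I(Z;Y)$ instead. The first piece is bounded by Step 1 minus $H(Z'|Z,Y)$.

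\textbf{Step 3 (the sign of $H(Z'|Z)$) — the main obstacle.} Steps 1 and 2 naturally produce $-H(Z'|Z)$, or $-H(Z'|Z,Y)$, not $+H(Z'|Z)$. Turning this into the stated $+H(Z'|Z)$ requires $-H(Z'|Z)\le H(Z'|Z)$, i.e.\ $H(Z'|Z)\ge0$. I would first try to secure this by reading all entropies in the quantized sense used in the paper's histogram/SMI estimation, where entropies are nonnegative; there, under the same convention, the leftover $I(Z';Y)$ or $I(Z;Y)$ of Step 2 would also have to be handled.

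However, I expect the statement as written to be false for differential entropy. Take $Y$ trivial, fix $\Sigma$, and let $Z'=Z+\sigma W$ with $\sigma\to0$. Then $H(Z'|Z)\to-\infty$ and $I\to+\infty$, while the right-hand side tends to $-\infty$. So the stated inequality can only hold under such a nonnegativity or quantization convention, or with the sign of $H(Z'|Z)$ reversed. Verifying which of these the authors intend is the first thing I would check.
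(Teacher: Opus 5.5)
Your Step 1 follows the paper's proof exactly: the identity $H(P_Y)-H(Q_Y)=\KL(Q_Y\|P_Y)+\int(q_Y-p_Y)\log p_Y$, then Pinsker, then $\epsilon\le\sqrt{\epsilon}$, which gives the same constant $C_Y$.

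For Step 2, the paper carries no extra term. It invokes class-wise separability: every feature has a unique label, and in the clean-label setting $Z$ and $Z'$ share it. From this it asserts $I(Z,Z')=\mathbb{E}_Y I(Z|Y,Z'|Y)$ and $H(Z'|Z,Y)=H(Z'|Z)$, and concludes $I(Z,Z')=\mathbb{E}_Y H(P_Y)-H(Z'|Z)$. The second identity is fine when $Y$ is a function of $Z$, but the first is not. With $y$ the label of $z$ and $z'$, the marginals are $p_Z(z)=p_Y(y)p(z|y)$ and $p_{Z'}(z')=p_Y(y)p(z'|y)$. Restoring the dropped factors gives $I(Z;Z')=H(Y)+\mathbb{E}_Y I(Z|Y,Z'|Y)$. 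So the leftover you flag is real. Your inequality $I(Z;Z')\le I(Z;Z'|Y)+I(Z;Y)$, together with $I(Z;Y)\le H(Y)\le\log|\Y|$, controls it even without separability, but nothing removes it.

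On Step 3, you are not missing an idea that the paper has. Its derivation stops at $\mathbb{E}_Y H(P_Y)-H(Z'|Z)$ and concludes by ``taking the expectation''. What it actually establishes is therefore the bound with $-H(Z'|Z)$; the $+$ in the statement is never justified. Your Steps 1--2 prove
\[ I(Z,Z')\le \tfrac{d}{2}\log(2\pi e)+\tfrac{1}{2}\mathbb{E}_Y\log\det\Sigma_Y-H(Z'|Z,Y)+I(Z;Y)+\mathbb{E}_Y C_Y\sqrt{\epsilon}, \]
which is the paper's argument with both slips repaired. The paper's remark treats the conditional-entropy term as a constant, so neither repair changes how the theorem is used, namely through its dependence on $\log\det\Sigma_Y$.

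Your counterexample, however, overlooks $C_Y$. Finiteness of $\KL(\N(\mu_Y,\Sigma_Y)\|p(Z'|Y))$ forces the Gaussian to be absolutely continuous with respect to $p(Z'|Y)$, so $p(Z'|Y)>0$ almost everywhere on $\mathbb{R}^d$. A probability density on $\mathbb{R}^d$ has infimum $0$, so $m_Y=0$ and $C_Y=+\infty$. In your $\sigma\to 0$ family the right-hand side is therefore $+\infty$, not $-\infty$. The statement as literally written is vacuous rather than false, and Step 1 (yours and the paper's) never yields a finite error term.

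This is intrinsic, not a technicality. Take $P=(1-\delta)Q+\delta R$ with $R$ increasingly spread out: then $\KL(Q\|P)\le\log\frac{1}{1-\delta}$ while $H(P)\to\infty$. So no finite error term depending only on $\epsilon$ can work, even with the corrected sign. Your example does refute the $+H(Z'|Z)$ version once the error term is finite, which is presumably the intended reading. Your conclusion about the sign therefore stands, but it needs that caveat.
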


\begin{remark}
 Rationality of assumptions in Theorem \ref{th-gauss} is discussed in Appendix \ref{app:guass}. It is noteworthy that the uncertainty of $H(g(X')|g(X))$ arises solely from two factors: the UE generator $\G: \X \to \X'$ and the training algorithm $\A: \X' \to \F$. Therefore,   if both $\G$ and $\A$ are predefined,   for example $\G$ as EM and $\A$ as SGD,   the third term $H(g(X')|g(X))$ in the equation can be considered as a constant. The term $\frac{d}{2} \log(2\pi e)$ is clearly a constant,   thus the critical variable is the covariance $\Sigma_Y$ in the second term.
\end{remark}

\begin{wrapfigure}[15]{r}{0.39\textwidth}
    \centering
    \vspace{-10pt}
    \includegraphics[width=0.36\textwidth]{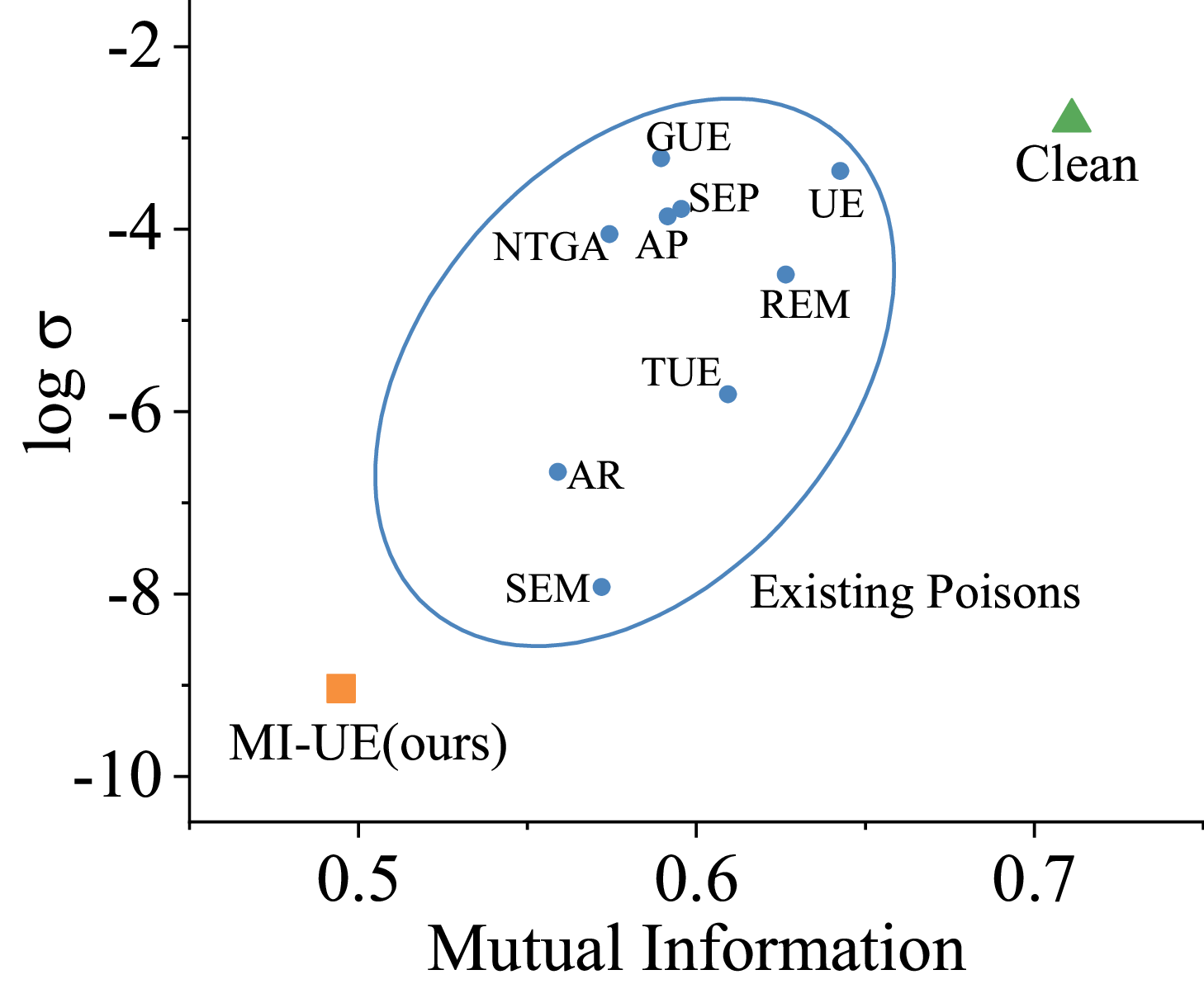}
    \vspace{-8pt}
    \caption{Feature covariance of different unlearnable examples. Results show that all unlearnable methods reduce both MI and covariance, with our MI-UE achieving the lowest values.}
    \label{fig:mi-logvar}
\end{wrapfigure}

As discussed in Section \ref{sec-4},   to ensure the effectiveness of UEs,   we need to ensure that MI between the poisoned and clean features is minimized. According to Theorem \ref{th-gauss},   this can be achieved by minimizing the conditional covariance of the poisoned features $g(X')|Y$, namely $\Sigma_Y$ when it obeys Gaussian mixture distribution.

\vspace{-3pt}
\subsection{Mutual Information \\ Unlearnable Examples}
\vspace{-3pt}

Based on the theoretical analysis in Section \ref{sec-5.1},   we aim to reduce the covariance of $g(X')|Y$. A straightforward approach is to minimize the Euclidean distance between intra-class features $g(X')$. However,   normalization techniques such as batch normalization~\citep{ioffe2015batch} and layer normalization~\citep{ba2016layer} are commonly employed to enhance the performance of deep models,   and the accompanying scaling often leads to the minimization of Euclidean distances between features becoming ineffective. Consequently,   we additionally employ cosine similarity loss as a more robust metric~\citep{nguyen2010cosine,   oord2018representation}. 
Specifically, we introduce a novel MI reduction loss $\L_{\mi}$ to generate UEs:
\vspace{-1pt}
\begin{align} 
\label{fc-loss}
\small
     \L_{\mi}(x+\delta,  y;\theta)_j = 
     &\log\left(1 + 
      \frac{\sum_{k\in B,   y_{b_k}\neq y_{b_j}}\exp(g(x_{b_j}+\delta_{b_j})^Tg(x_{b_k}+\delta_{b_k})/\tau)}
     {\sum_{k\in B,   y_{b_k}=y_{b_j}} \exp(g(x_{b_j}+\delta_{b_j})^Tg(x_{b_k}+\delta_{b_k})/\tau)}\right) \nonumber\\ 
     &+ \zeta \cdot \log( 1 + \sum_{k\in B   }\|g(x_{b_j}+\delta_{b_j}) - g(x_{b_k}+\delta_{b_k})\|_2 ),  
\end{align}
where $\theta$ denotes the model parameters, $B=\{(x_{b_j},   y_{b_j})\}_{j=1}^{N_B}$ is a mini-batch, $j$ is the batch index, $g$ is the feature extractor,   $\tau$ is the loss temperature,   and $\zeta$ represents the balancing hyperparameter. Within $\L_{\mi}$,  we further reduce covariance by maximizing the cosine similarity between intra-class features. Moreover,   we minimize the cosine similarity between inter-class features to prevent class collapse. Inspired by \cite{chen2020simple},   we employ the exponential operation to simulate the softmax process in cross-entropy loss and the logarithm operation,   facilitating better optimization. 
Beyond optimizing poisons, we also update the shadow model $\theta$ with the cross-entropy loss $\L_{{\rm ce}}$, resulting in bi-level optimization by the min-min approach:
\begin{align}
&\min_\delta \L_{\mi}(x+\delta, y;  \theta^*(\delta)), \nonumber \\  {\rm s.t.} \quad & \theta^*(\delta)=\arg\min_\theta \L_{{\rm ce}}(x+\delta, y;\theta).  
\end{align}
The detailed algorithm, denoted as {\em Mutual Information Unlearnable Examples (MI-UE)},   is outlined in Algorithm \ref{alg:fc-em}.
We evaluate the MI and feature covariance of different UEs in Figure \ref{fig:mi-logvar},  compared to clean samples,   all UE methods reduce both MI and covariance of poisoned features,   with our proposed MI-UE achieving the lowest values of both metrics.

\section{Experiments}
\subsection{Experimental Setup}
\label{exp-setup}

\textbf{Datasets and models.}
In our experiments,   we employ three common benchmark datasets: CIFAR-10~\citep{krizhevsky2009learning},   CIFAR-100~\citep{krizhevsky2009learning} and ImageNet-subset~\citep{russakovsky2015imagenet} containing the first 100 classes of ImageNet.
We evaluated a variety of network architectures,   including ResNet-18~\citep{he2016deep},   ResNet-50~\citep{he2016deep},   
DenseNet-121~\citep{huang2017densely},   WRN-34-10~\citep{zagoruyko2016wide},   and ViT-B~\citep{dosovitskiy2020image}. 
Additionally,   we utilize shallower networks such as linear network (Linear),   two/three-layer fully-connected feed-forward network (2/3-NN) and a classical convolutional network,   LeNet-5~\citep{lecun1998gradient} for evaluation. More details are provided in Appendix \ref{app:exp}.

\textbf{Baseline methods.}
We compare our MI-UE methods with various baseline UEs,   including error-minimizing noises (EM)~\citep{huang2020unlearnable},   strong adversarial poisons (AP)~\citep{fowl2021adversarial},   neural tangent attacks (NTGA)~\citep{yuan2021neural},   auto-regressive noises (AR)~\citep{sandoval2022autoregressive},   robust error-minimizing noises (REM)~\citep{fu2022robust},   stable unlearnable examples (SEM)~\citep{liu2024stable},   
game-theoretical unleanrable attacks (GUE)~\citep{liu2024game} and transferable unlearnable examples (TUE)~\citep{ren2022transferable}.
For EM,   AP,   REM,   SEM,  GUE and TUE,   the source model for poison generation is ResNet18. For TUE,   the unsupervised backbone is SimCLR. For NTGA,   the ensemble model employed is FNN.

\textbf{Implementation details.} 
For UE generation under the $l_{\infty}$ norm,   we set the total poison budget at 8/255.
In the generation of MI-UE, for CIFAR-10/100,   the poisoning epoch is set to 100,   at each epoch we conduct 10 steps PGD attack with a step size of 0.2/255. For ImageNet-subset,   we set the poisoning epoch to be 50,   and the step size of PGD be 0.4/255. The balancing hyperparameter $\zeta$ is set to 0.1 by default. For standard training (ST) evaluation,   we use cosine scheduler with an initial learning rate of 0.5,   setting the total evaluation epochs at 200.
For adversarial training (AT) evaluation,   following~\cite{fu2022robust,  liu2024stable},   we set the initial learning rate at 0.1,   decaying it by a factor of 10 at epochs 40 and 80,   with the total evaluation epochs established at 100.
For further details and additional experimental results,   please refer to Appendices \ref{app:exp} and \ref{app-add-exp}.

\subsection{Main Results}
We evaluate our MI-UE method compared with other baseline methods on three benchmark datasets,   CIFAR-10,   CIFAR-100 and ImageNet-subset.
As demonstrated in Table \ref{tab:sl-cifar10},  MI-UE achieves the lowest test accuracy compared to other UEs on the three benchmark datasets,   indicating superior poisoning effectiveness.
Further assessments are conducted on the transferability across different victim models,   including modern deep neural networks such as ResNet,   DenseNet,   Wide-ResNet,   and Vision Transformer,   as well as shallower architectures including 2-NN, 3-NN and LeNet-5. As shown in Table \ref{tab:trans-cifar10},   our MI-UE consistently results in the lowest test accuracy across all network architectures,   establishing it as the most effective form of UEs.
Notably,   some well-known UEs,   such as AP,   AR,   SEM,   and TUE,   perform well on modern deep networks but poorly on shallower networks (i.e.,   2-NN,   3-NN,   and LeNet-5). This disparity suggests a potential sensitivity of these methods to different network architectures. 
In contrast,   our MI-UE exhibits robust transferability across both deep models and shallower architectures. Additional results and analyses are available in Appendix \ref{app-transfer}.

\begin{table}[t]
\centering

\caption{Quantitative results(\%) of baseline methods and our MI-UE for ResNet-18 on three benchmark datasets. 
Our MI-UE achieves the lowest test accuracy compared to other unlearnable examples, indicating excellent poisoning effectiveness.
}
\label{tab:sl-cifar10}
\begin{tabular}{lccccccccccc}
\toprule
         Dataset/Method   & Clean  & EM & AP & NTGA & REM & SEM & TUE & MI-UE (ours) \\
\midrule
CIFAR-10 &  94.45 & 24.17 & 11.21 & 23.11 & 22.94 & 14.78 & 11.25 & \textbf{9.95} \\
CIFAR-100 &   76.65  & 2.09 & 3.73 & 3.08 & 7.52 & 6.29 & 1.34 & \textbf{1.17} \\
ImageNet-subset & 80.43 & 1.26 & 9.10 & 8.42 & 13.74 & 4.10 & 4.95 & \textbf{1.03} \\
\bottomrule
\end{tabular}
\end{table}

\begin{table}[t]
\centering
\vspace{-5pt}
\caption{
Quantitative results(\%) of baseline methods and our MI-UE on transferability across different models on CIFAR-10. All of unlearnable examples are generated by ResNet-18.
Above the dashline represents modern deep networks, while below the dashline represents shallow networks. 
Our MI-UE consistently results in the lowest test accuracy across all network architectures.}
\small
\setlength{\tabcolsep}{4.5pt}
\label{tab:trans-cifar10}
\begin{tabular}{lccccccccccc}
\toprule
         Model/Method   & Clean  & EM & AP & NTGA & AR & REM & SEM & GUE &TUE & MI-UE (ours) \\
\midrule
ResNet-18 & 94.45 & 24.17 & 11.21 & 23.11 & 17.41 & 22.94 & 14.78 & 12.04 & 11.25 & \textbf{9.95}\\
ResNet-50 & 95.16 & 23.57 & 11.66 & 19.01 & 15.28 & 23.33 & 13.61 & 12.99 & 10.01 & \textbf{9.98}\\
DenseNet-121 & 94.91 & 24.87 & 11.80 & 19.83 & 16.50 & 21.87 & 15.19 & 12.46 & 11.41 & \textbf{9.93} \\
WRN34-10 & 96.03 & 24.25 & 11.28 & 21.92 & 14.62 & 21.64 & 13.64 & 13.22 & 12.11 & \textbf{10.68} \\
ViT-B & 90.92 & 27.35 & 24.21 & 43.55 & 24.16 & 21.67 & 25.52 & 17.72 & 35.54 & \textbf{15.51} \\
\rowcolor[HTML]{EFEFEF} 
LeNet-5 & 80.68 & 26.30 & 31.38 & 44.06 & 73.33 & 29.97 & 22.94 & 13.30 & 28.37 & \textbf{10.80}\\
\rowcolor[HTML]{EFEFEF} 
3-NN & 62.12 & 28.54 & 61.03 & 44.81 & 62.02 & 48.61 & 54.44 & 16.97 & 56.55 & \textbf{14.16} \\
\rowcolor[HTML]{EFEFEF} 
2-NN & 56.15 & 32.50 & 55.78 & 39.34 & 56.75 & 47.37 & 50.79 & 22.08 & 48.75 & \textbf{17.82}\\
\bottomrule
\end{tabular}
\end{table}

\subsection{Evaluation under Defense Strategy}
\begin{wraptable}[19]{r}{0.51\textwidth}
\vspace{-21pt}
\centering
\caption{
Quantitative results(\%) of baseline methods and our MI-UE under adversarial training with different budget on CIFAR-10. 
AT-$i$ means AT with budget $i/255$, ST means standard training.
Our MI-UE achieves state-of-the-art performance, particularly achieving an impressive 45.55\% at AT-6.
}
\small
\label{tab:at-cifar10}
\setlength{\tabcolsep}{5pt}
\begin{tabular}{lccccccc}
\toprule
           Method   & AT-8  & AT-6 & AT-4 & AT-2 & ST\\
\midrule
Clean & 85.10 & 87.54 & 89.77 & 91.95 & 94.45\\
EM  &  84.57 & 85.42	& 84.29	& 52.81 & 24.17 \\
AP & 82.70 & 85.48 &	88.14 &	22.48 & 11.21 \\
NTGA & 84.22 & 86.27 & 88.36 & 87.87 & 23.11\\
AR & 84.54 & 87.09 & 89.81 & 92.45 & 17.41 \\
SEM  & 85.99 & 86.82 & \textbf{29.77} & 19.41 & 14.78 \\
REM & 85.99 & 81.91 & 39.45 & 30.64 & 22.94 \\
ENTF & 75.72 & 76.84 & 77.95 & 81.38 & 91.96 \\
TUE & 84.10 & 86.07 & 89.29 & 91.70 & 11.25 \\
GUE & 84.37	& 86.54	& 71.21	& 17.66 & 12.04 \\
MI-UE   & \textbf{70.56} & \textbf{45.55} & 31.79 & \textbf{17.39} & \textbf{9.95} \\
\bottomrule
\end{tabular}
\end{wraptable}

\textbf{Adversarial training.}  
For UEs,   adversarial training~\citep{madry2017towards} is the most straightforward defense mechanism,   as UE noises are always constrained by a small norm budget. Adversarial training with the same budget of UEs theoretically minimizes the upper bound of UE generalization risk~\citep{tao2021better}. Recent developments have introduced robust UEs specifically targeting adversarial training,   such as REM~\citep{fu2022robust} and SEM~\citep{liu2024stable}.
However,   these methods are only effective when the adversarial training budget is less than half of the poison budget and fail under larger defense budgets. Another type of UE called  ENTF~\citep{wen2023adversarial},   that can diminish the efficacy of adversarial training. However,   this method is only applicable to large budget of AT and is ineffective for smaller budgets or standard training. Our MI-UE method combines the strengths of the above methods,   achieving outstanding poisoning effects under both smaller and larger AT budgets. 
As shown in Table \ref{tab:at-cifar10},   MI-UE achieves the lowest test accuracy under AT settings of 8/255,   6/255,   2/255,   and standard training (ST),   and it matches the performance of the state-of-the-art robust UE,   SEM,   at AT-4. Notably,   MI-UE demonstrates a significant advantage at AT-8 and AT-6,   particularly achieving an impressive 45.55\% at AT-6.
Thus, our MI-UE achieves state-of-the-art performance under adversarial training defenses. Further details are provided in Appendix \ref{app-lr-process}.

\textbf{Data augmentations.} 
Standard training often incorporates various data augmentation techniques to improve generalization,   such as horizontal flipping and random cropping. To further enhance generalization and prevent overfitting,   advanced data augmentation methods have been developed,   including Cutout~\citep{devries2017improved},   Cutmix~\citep{yun2019cutmix},   and Mixup~\citep{zhang2017mixup}. 
To further evaluate the ability of UEs,   we train the victim models under these three augmentation respectively. As shown in the first three columns of Table \ref{tab:def-cifar10},   all existing UEs are highly insensitive to these data augmentations, whereas our MI-UE still achieves the best performance.

\begin{wraptable}[15]{r}{0.63\textwidth}
\vspace{-15pt}
\centering
\caption{
Quantitative results(\%) of baseline methods and MI-UE under various defense on CIFAR-10. 
Our MI-UE achieves the lowest test accuracy under the majority of defense methods.}
\setlength{\tabcolsep}{1.2pt}
\small
\label{tab:def-cifar10}
\begin{tabular}{lccccccccccc}
\toprule
         Defense   & Cutout & Cutmix & Mixup & UER & ISS & OP & AVA & D-VAE & LE\\
\midrule
Clean & 95.53 & 96.43 & 95.83 & 93.28 & 82.71 & 88.82 & 89.15 & 93.29 & 92.32 \\
EM & 22.90 & 24.08 & 27.22 & 91.41 & 82.78 & 71.70 & 86.62 & 91.42 & 90.93 \\
NTGA & 17.65 & 25.53 & 19.04 & 93.39 & \textbf{80.84} & 78.14 & \textbf{85.13} & 89.21 & 87.31\\
AR & 12.84 & 16.20 & 16.24 & 93.32 & 82.79 & 84.69 & 88.38 & 91.11 & 89.75\\
REM & 26.49 & 24.44 & 18.74 & 69.63 & 82.59 & 29.61 & 86.28 & 86.38 & 90.14 \\
SEM & 14.25 & 15.39 & 15.06 & 70.53 & 81.86 & 23.72 & 87.30 & 88.55 & 88.25\\
GUE & 13.98 & 20.01 & 12.13 & 85.39 & 83.10 & 86.96 & 86.63 & 90.58 & 84.83\\
TUE & 11.01 & 10.95 & 11.26 & 92.60 & 82.61 & 85.28 & 88.72 & 91.49 & 85.12 \\
MI-UE & \textbf{10.13} & \textbf{10.17} & \textbf{10.78} & \textbf{67.14} & 81.35 & \textbf{19.29} & 86.18 & \textbf{84.86} & \textbf{84.30}\\
\bottomrule
\end{tabular}
\end{wraptable}

\textbf{Tailored defense for unlearnable examples.}
We evaluate four recent defenses specifically designed for UEs,   namely UER~\citep{qin2023learning},   ISS~\citep{liu2023image},   OP~\citep{sandoval2024can}, AVA~\citep{dolatabadi2023devil}, D-VAE~\citep{yu2024purify} and LE~\citep{jiang2023unlearnable}. 
The experimental results are presented in the last six columns of Table \ref{tab:def-cifar10}. 
UER and OP exhibit inconsistent defensive performance across different UEs. 
Specifically,   they seems work not well for AP,   REM,   SEM,   and our MI-UE. UER achieves an accuracy of approximately 70\% on these UEs,   while OP only manages about 20\%. Under these two defenses, D-VAE and LE,   our MI-UE maintains the best unlearnability.
In contrast,   ISS and AVA demonstrate robust performance against existing UEs,   with accuracy recovery rates exceeding 80\% for all UEs, indicating that bypassing state-of-the-art defenses for UEs remains a challenge, which may be the intrinsic drawback of UEs. Nevertheless, our MI-UE still achieves the best unlearnability in worst-case scenario (86.18\% under AVA), the second-best, SEM is 88.55\% under D-VAE, other UEs are over 90\% in worst-case defenses.

\subsection{Ablation Studies}
\begin{wraptable}[23]{r}{0.55\textwidth}
\begin{minipage}{\linewidth}
\vspace{-23pt}
\centering
\caption{
The ablation study on two terms of MI-UE loss. 
ImageNet-S means ImageNet-subset.
Results show that the similarity term play a more important role for unlearnability.
}
\small
\label{tab:abl-miue}
\setlength{\tabcolsep}{2pt}
\begin{tabular}{lcccccc}
\toprule
         Method/Dataset   & CIFAR-10  & CIFAR-100 & ImageNet-S \\
\midrule
MI-UE &  9.95 & 1.17 & 1.03 \\
w/o Distance Term & 10.09 & 2.52 & 1.46 \\
w/o Similarity Term & 51.65 & 26.72 & 23.38 \\
\bottomrule
\end{tabular}
\end{minipage}

\begin{minipage}{\linewidth}
\centering
\vspace{-1pt}
\caption{
Sensitive analysis of the balancing hyperparameter strength for MI-UE.
}
\label{tab:zeta-miue}
\small
\setlength{\tabcolsep}{3pt}
\begin{tabular}{lccccccc}
\toprule
         Strength & 0  & 0.001  & 0.01 & 0.1 & 1 & 10 & 100 \\
\midrule
Test Acc & 10.09 & 10.03 & 10.08 & 9.95 & 10.31 & 45.90 & 45.47 \\
\bottomrule
\end{tabular}
\end{minipage}

\begin{minipage}{\linewidth}
\centering
\vspace{-1pt}
\caption{The ablation study of MI-UE compared with MI-based regularizers.
}
\label{tab:mi-reg}
\small
\setlength{\tabcolsep}{3pt}
\begin{tabular}{lccccccc}
\toprule
        Method	& Acc(Acc Gap)(\%) &	MI(MI Gap)\\
\midrule
UE	&24.17(70.28)	&0.6400(0.0722) \\
UE+MI reg.	&15.62(78.83)	&0.5336(0.1786)  \\
AP	&11.21(83.24)	&0.5871(0.1251)  \\
AP+MI reg.	&10.01(84.44)	&0.5183(0.1939)  \\
MI-UE&	\textbf{9.95(84.50)}	& \textbf{0.4969(0.2153) }

\\
\bottomrule
\end{tabular}
\end{minipage}
\end{wraptable}

\textbf{Two terms of the MI-UE loss.}
Our MI-UE loss $\L_{\mi}$ includes both similarity term and distance term, so we investigate the effect of them in Table \ref{tab:abl-miue}. Results show that the similarity term play a more important role for unlearnability,   only distance term will significantly degrade the power of MI-UE. Nevertheless,   MI-UE with distance term will still increase the unlearnability although it seems a little marginal.

\textbf{The strength of balancing hyperparameter.}
We conduct the sensitive analysis of the balancing hyperparameter $\zeta$. Results provided in Table \ref{tab:zeta-miue} demonstrate that the strength be 0.1 or less will slightly increase the effectiveness of MI-UE,   while a larger strength like 10 or 100 will significantly destroy the unlearnable power.
This phenomenon further reveals the superiority of the similarity measure compared with the simple distance measure.

\textbf{Compared with MI-based regularizers.} 
Inspired by \cite{belghazi2018mutual}, we construct an additional MI regularization network to minimize MI. Specifically, we optimize the unlearnable poisons 
 from both the classifier network (ResNet-18) with cross-entropy loss (e.g., UE and AP), and the MINE network (MLP) with MINE loss (a lower bound of MI). All these experiments are conduct on CIFAR-10 dataset. Results in Table \ref{tab:mi-reg} demonstrate that under MI loss regularization, both UE and AP show further reduction of MI and drop of test accuracy, further validate our findings: unlearnable examples work because of MI reduction. Moreover, we find that MI regularizations are still suboptimal compared with our MI-UE, indicating that our algorithm can induce better MI reduction.

\begin{wraptable}[17]{r}{0.45\textwidth}
\begin{minipage}{\linewidth}
\vspace{-15pt}
\centering
\caption{
Quantitative results(\%) of MI-UE with different poisoning epochs. }
\setlength{\tabcolsep}{3pt}
\small
\label{tab:poi-epoch}
\begin{tabular}{lccccccccccc}
\toprule
          Epochs/Test Acc(\%)    & CIFAR-10 & CIFAR-100\\
\midrule
100 (default)  & 9.95 & 1.17 \\
50  & 10.25 & 1.66 \\
20  & 15.39 & 3.23 \\
\bottomrule
\end{tabular}
\end{minipage}

\begin{minipage}{\linewidth}
\centering
\caption{
Quantitative results(\%) of MI-UE with different poisoning budgets. }
\setlength{\tabcolsep}{3pt}
\small
\label{tab:poi-budget}
\begin{tabular}{lccccccccccc}
\toprule
          Budgets/Test Acc(\%)    & CIFAR-10 & CIFAR-100\\
\midrule
4/255 & 10.49 & 1.16\\
6/255 & 10.09 & 1.19 \\
8/255 (default)  & 9.95 & 1.17 \\
12/255 & 9.83 & 1.17\\
16/255 & 9.97 & 1.09\\
\bottomrule
\end{tabular}
\end{minipage}
\end{wraptable}

\textbf{Different training epochs.}
As shown in Appendix \ref{comp-cost}, generate MI-UE poisons on CIFAR-10/100 requires about 3.6 hours, which is about 1.5x time compared with standard UE's generation like EM. To mitigate the potential computational overheads, we test MI-UE with smaller poisoning epochs in Table \ref{tab:poi-epoch}. Results demonstrate that with half of the generation time (50 epochs), MI-UE still outperforms on CIFAR-10 and achieves the second-best performance on CIFAR-100 compared with existing UEs (Table \ref{tab:sl-cifar10}). Furthermore, MI-UE still gains comparable results with other UEs even the poisoning epochs are reduced to 20. Similar results for ImageNet-subset are provided in Appendix \ref{comp-cost}. The effectiveness of MI-UE under economic scenarios further demonstrate MI-UE's real-world applications.

\textbf{Different poisoning budgets.}
We set the poisoning budget $\epsilon=8/255$ as default to make sure a fair comparison with existing UEs. To validate our MI-UE in broader scenarios, we further generate MI-UE with different poisoning budget from $4/255$ to $16/255$, and evaluate the unlearnability in Table \ref{tab:poi-budget}. Results demonstrate that even though poisoning budgets across from $4/255$ to $16/255$, MI-UE always results in the unlearnability to a random guess level (i.e., 10\% for CIFAR-10, 1\% for CIFAR-100).

%
\begin{wraptable}[15]{r}{0.63\textwidth}
\vspace{-15pt}
\centering
\caption{
Quantitative results(\%) of baseline methods and MI-UE under various JPEG compression quality in ISS defense~\citep{liu2023image}. ``Average'' means the average performance across these compression quality.}
\setlength{\tabcolsep}{2.5pt}
\small
\label{tab:jpeg-cifar10}
\begin{tabular}{lccccccccccc}
\toprule
         Quality   & 6 & 8 & 10 & 12 & 15 & 20 & 30 & Average \\
\midrule
Clean & 78.52 & 81.35 & 83.12 & 83.97 & 85.41 & 86.57 & 88.17 & 83.87\\
EM & 77.57 & 80.96 & 82.94 & 83.63 & 84.91 & 85.67 & 86.30 & 83.14 \\
NTGA & 76.39 & 78.63 & 79.38 & 79.64 & 81.68 & 81.59 & 81.50 & 79.83 \\
AR & 78.89 & 81.55 & 82.83 & 84.18 & 85.39 & 86.68 & 88.41 & 83.99 \\
REM & 78.04 & 81.30 & 82.20 & 83.23 & 84.50 & 84.60 & 85.24 & 82.73 \\
GUE & 78.27 & 80.69 & 83.00 & 83.97 & 85.39 & 86.46 & 88.06 & 83.69\\
TUE & 78.48 & 81.83 & 83.23 & 84.13 & 85.12 & 85.88 & 86.38 & 83.58 \\ 
MI-UE & 77.55 & 80.86 & 81.93 & 83.37 & 84.61 & 85.26 & 84.99 & 82.65 \\
\bottomrule
\end{tabular}
\end{wraptable}
\textbf{Defenses under different JPEG quality in ISS.}
We select the JPEG compression in ISS~\citep{liu2023image} with different strengths 6, 8, 10, 12, 15, 20 and 30 as the defense of unlearnable examples, the quantitative results are provided in Table \ref{tab:jpeg-cifar10}. NTGA outperforms on these JPEG quality compressions. Our MI-UE are comparable with REM, achieving the second-best unlearnable performance on average of these JPEG compressions quality. All these unlearnable examples become ineffective against JPEG compression, with accuracy recovery to over 80\%, indicating that bypassing tailored defense for UEs remains a challenged problem.

\section{Conclusion}
\label{conclusion}
In this paper, we introduce a novel perspective for elucidating the mechanisms underlying unlearnable examples: mutual information reduction. 
We showcase that the
harmonious relationship between MI reduction and accuracy drop can be founded in all effective unlearnable examples.
Additionally, we derive the mutual information from a covariance reduction standpoint. Based on thi theoretical analysis, we propose a new poisoning method, Mutual Information Unlearnable Examples (MI-UE), which aims to create more effective unlearnable examples by reducing covariance. Extensive experiments consistently demonstrate the superiority of our MI-UE.
A limitation of our method is its suboptimal performance under state-of-the-art defenses. However, it is noteworthy that these advanced defenses themselves moderately reduce the accuracy of models trained on clean datasets. We leave further investigation of this aspect to our future work.

\newpage
\section*{Ethics Statement}
\label{broad-impact}
This work aims at designing a more effective unlearnable example. Because unlearnable examples are considered as a privacy preserving method to protect data from malicious abuse,   we believe our work can bring positive societal impacts in the domain of privacy preserving.
For potential negative societal impacts,   a malicious entity may seek unlearnable examples to impair normal deep models; defenders can utilize defense strategies on unlearnable examples to avoid this situation.

\section*{Reproducibility Statement}
We provide the implementation details in Section \ref{exp-setup} and Appendix \ref{app:exp} to ensure reproducibility. 
Our codes are available at https://github.com/hala64/mi-ue.

\section*{Acknowledgment}
This paper is supported by the Strategic Priority Research Program of CAS Grant XDA0480502, NSFC Grants 12288201, 62276149 and 92570001, and the Robotic AI-Scientist Platform of the Chinese Academy of Sciences.

\bibliography{ref}
\bibliographystyle{plainnat}

\newpage

\appendix

\section{Theoretical Views of Relationship between UEs and MI Reduction}
\label{app:theory}

\begin{definition}[$\H$-Divergence,  ~\citep{ben2010theory}]
\label{def-hdiv}
Give the domain $\X$ with $\P$ and $\P'$ probability distributions over $\X$,   let $\H$ be a hypothesis class on $\X$ and $I(h)$ be the set of which $h \in \H$ is the characteristic function,   i.e.,   $x\in I(h)\xLeftrightarrow{\hspace{1em}}  h(x)=y$,   where $y$ is the label of $x$. The $\H$-Divergence between $\P$ and $\P'$ is defined as 
\begin{align}
d_{\H}(\P,   \P') = 2\sup_{h\in\H}| \textup{Prob}_{\P}[I(h)] - \textup{Prob}_{\P'}[I(h)] |
\end{align}
\end{definition}

\begin{definition}[$\H\Delta\H$ Space,  ~\citep{ben2010theory}]
\label{def-hspace}
For a hypothesis space $\H$,   the \textit{symmetric difference hypothesis space} $\H\Delta\H$ is the set of hypotheses that
\begin{align}
g\in\H\Delta\H \xLeftrightarrow{\hspace{1.5em}} g(x)=h(x)\oplus h'(x) \textup{\quad for some \ } h,  h'\in\H,  
\end{align}
where $\oplus$ is the XOR function. In other word,   $g\in\H\Delta\H$ is the set of disagreement between two hypotheses in $\H$.
\end{definition}

\begin{theorem}[Proof in Appendix \ref{app:th-proof}] 
\label{th-general}
Consider two data distributions $\D$ and $\D'$,   with variables $X\sim \D_{\X}$,   $X'\sim\D'_{\X}$,   and $Y\sim \D_{\Y}=\D'_{\Y}$.  Let the classifier be $f=h\circ g$,   where $g$ is a feature extractor and $h$ is a linear classifier. Denote $Z=g(X), Z'=g(X')$.
For any label $Y$, Assume $Z|Y$ follows a multivariate Gaussian distribution $\N(\mu_{1,Y},  \Sigma_{1,Y})$, $Z'$ relies on $Z$ with  $Z'|Y = Z|Y +\N(\mu_{2,Y},   \Sigma_{2,Y})$,   
and $H(g( \frac{X+X'}{2}))\geq H(\frac{Z+Z'}{2})$.
Then we have: 
\begin{align}
\R_{\D}(f)\leq \R_{\D'}(f) - 4 I(Z,   Z') + 4 H(Y) + \frac{1}{2}d_{\H \Delta \H}(p(Z),   p(Z')),  
\label{eq-th1}
\end{align}
as long as $\det{\Sigma_{2,Y}}\geq (\frac{2}{\pi e})^d$.
Here,   $\R_{\D}$ and $\R_{\D'}$ represent the expected population risks over distributions $\D$ and $\D'$,   respectively,   $I(\cdot,   \cdot)$ denotes the mutual information,   $H(\cdot)$ represents the entropy,   $d_{\H \Delta \H}(\cdot,   \cdot)$ is the $\H \Delta \H$-divergence,   used to measure the marginal distributions between two logits,  
$p(\cdot)$ is the probability density function,  
and $d$ is the feature dimension.
\end{theorem}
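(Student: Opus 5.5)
The plan is to split the bound into a standard domain-adaptation part and an information-theoretic part. First, invoke the classical bound of \citet{ben2010theory} in feature space, applied to the marginals $p(Z)$ and $p(Z')$ with the linear hypothesis class $\H$:
\begin{align}
\R_{\D}(f)\leq \R_{\D'}(f) + \frac{1}{2}d_{\H\Delta\H}(p(Z),p(Z')) + \lambda^*, \nonumber
\end{align}
where $\lambda^*=\min_{h\in\H}\bigl(\R_{\D}(h\circ g)+\R_{\D'}(h\circ g)\bigr)$ is the ideal joint risk. This step uses only Definitions \ref{def-hdiv} and \ref{def-hspace}: the triangle inequality for the $0$-$1$ disagreement, plus the identity that the disagreement probability of two hypotheses differs between domains by at most $\frac12 d_{\H\Delta\H}$. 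After it, the theorem reduces to showing
\begin{align}
\lambda^* + 4I(Z,Z') \leq 4H(Y). \nonumber
\end{align}

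Second, I would decompose the mutual information with the label as a bridge. By the chain rule,
\begin{align}
I(Z,Z') \leq I(Z,(Z',Y)) = I(Z,Y) + I(Z,Z'\mid Y) \leq H(Y) + I(Z,Z'\mid Y). \nonumber
\end{align}
Under the Gaussian assumptions, conditionally on $Y$ we have $Z'=Z+N_Y$ with $N_Y\sim\N(\mu_{2,Y},\Sigma_{2,Y})$. I am reading this as $N_Y$ being independent of $Z$ given $Y$. Then
\begin{align}
I(Z,Z'\mid Y)=h(Z'\mid Y)-h(N_Y), \qquad h(N_Y)=\tfrac12\log\bigl((2\pi e)^d\det\Sigma_{2,Y}\bigr). \nonumber
\end{align}
The hypothesis $\det\Sigma_{2,Y}\geq(\frac{2}{\pi e})^d$ is exactly what makes $h(N_Y)\geq d\log 2$. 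This strongly suggests that $h(N_Y)$ is meant to absorb the remaining terms, namely the entropy of $Z'\mid Y$ and the ideal risk $\lambda^*$.

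Third, I would handle $\lambda^*$, using the midpoint condition $H(g(\frac{X+X'}{2}))\geq H(\frac{Z+Z'}{2})$. The idea is to take $h^*$ to be the Bayes-optimal linear classifier on the mixed feature $\frac{Z+Z'}{2}$, whose conditional law given $Y$ is Gaussian with mean $\mu_{1,Y}+\frac12\mu_{2,Y}$. I would then bound its error via a Fano-type inequality, $\R\lesssim H(Y\mid\cdot)$, and use the midpoint entropy inequality to transfer the bound back to the two marginal risks. The factor $4$ should come from the halving: the joint risk is controlled by twice the midpoint risk, and the midpoint entropy contributes a $\frac{d}{2}\log 4 = d\log 2$ term that cancels against $h(N_Y)\geq d\log 2$.

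The main obstacle is this third step. Converting an entropy statement into a $0$-$1$ risk bound with constant exactly $4$, while making the $d\log 2$ terms cancel precisely, is delicate. The stated hypotheses look tuned for this cancellation, and the unstated independence of $N_Y$ from $Z$ given $Y$ is also needed. I would first check the scalar case $d=1$ with two classes to fix the constants. Only then would I extend to general $d$ using the log-determinant formula for Gaussian entropy, together with the entropy power inequality for $h(Z\mid Y)$ versus $h(\frac{Z+Z'}{2}\mid Y)$.
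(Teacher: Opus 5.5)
Your Step 1, the standard bound of \citet{ben2010theory}, is fine. The plan breaks at Step 2, and Step 3 cannot repair it. After Step 1 you need $\lambda^*+4I(Z,Z')\leq 4H(Y)$. The chain rule only gives $I(Z,Z')\leq H(Y)+I(Z,Z'\mid Y)$, so you would need $\lambda^*+4I(Z,Z'\mid Y)\leq 0$. Under your independence reading, $I(Z,Z'\mid Y)=\frac12\mathbb{E}_Y\bigl[\log\det(\Sigma_{1,Y}+\Sigma_{2,Y})-\log\det\Sigma_{2,Y}\bigr]$, which is strictly positive whenever some $\Sigma_{1,Y}\neq 0$. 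The quantity $h(N_Y)$ enters only through the difference $h(Z'\mid Y)-h(N_Y)\geq 0$, so no lower bound on $h(N_Y)$ can absorb $\lambda^*$ or anything else. A Fano or Hellman--Raviv bound on $\lambda^*$ does not help either: the midpoint hypothesis is a \emph{lower} bound on a feature entropy and gives no upper bound on $H(Y\mid\cdot)$.

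In fact, the inequality you reduce to is false under the stated hypotheses, and so is the displayed bound itself. Take $d=1$, $Y$ uniform on $\{0,1\}$, and $X\mid Y\sim\N(0,s)$ for both labels. Let $X'=X+N$ with $N\sim\N(0,1)$ independent of $(X,Y)$, and let $g$ be the identity. Every hypothesis holds: the midpoint one with equality, and $1\geq 2/(\pi e)$. Every classifier has $\R_{\D}(f)=\R_{\D'}(f)=\frac12$, so $\lambda^*=1$, and $d_{\H\Delta\H}\leq 2$. Yet $I(Z,Z')=\frac12\log(1+s)$, so the claimed bound would force $2\log(1+s)\leq 4\log 2+1$, which fails once $s\geq 10$ (in nats or bits). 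No choice of constants in Step 3 can close the argument without an extra assumption keeping $I(Z,Z')$ below roughly $H(Y)$.

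For comparison, the paper never confronts $\lambda^*$. It imports the bound of \citet{zhao2021domain}, which already has the shape $\R_{\D}(f)\leq\R_{\D'}(f)-4I_{\mathrm{mix}}+4H(Y)+\frac12 d_{\H\Delta\H}$. It then reads the mixture term as $I(W,W)=H(W)$ with $W=g(\frac{X+X'}{2})$. The midpoint hypothesis is used only to reduce to $I(Z,Z')\leq H(\frac{Z+Z'}{2})$, which the paper checks label by label with Gaussian entropy formulas. Your guess about the determinant threshold matches what happens there. Halving contributes $-d\log 2$, the condition $\det\Sigma_{2,Y}\geq(\frac{2}{\pi e})^d$ (equivalently $h(N_Y)\geq d\log 2$) cancels it, and $\det(\Sigma_{1,Y}+\Sigma_{2,Y})\leq\det(2\Sigma_{1,Y}+\Sigma_{2,Y})$ finishes. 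That route does not rescue the statement, though. The example above shows the imported bound cannot hold with its MI term read as the differential entropy $h(W)$, which is unbounded in $s$; moreover $I(W,W)=+\infty$ for continuous $W$. The paper's reduction therefore hides exactly the gap you ran into.
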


\begin{remark}
    We assume that $Z'$ relies on $Z$ because unlearnable examples are crafted from clean dataset. Therefore, their corresponding features $Z'$ and $Z$ should have certain relationship. In our theorem, we assume that they are different by a certain distribution shift.
\end{remark}

\begin{remark}
The entropy of $g(\frac{X+X'}{2})$ is not less than $\frac{Z+Z'}{2}$ means that the uncertainty of the former is not less than the latter,   this is reasonable when $g$ is a good representation for both $\D$ and $\D'$,   the uncertainty of features $Z$ and $Z'$ is relatively low,   but the uncertainty of the feature on the mixup data $\frac{X+X'}{2}$ is relatively high.
\end{remark}

\begin{remark}
\label{re-noise}
To further make sure the conditional inequality $\det{\Sigma_{2,Y}}\geq (\frac{2}{\pi e})^d$ holds, we may simply add some random noises for each variable $Z'$, which induce negligible change on performance, to ensure the covariance matrix $\det{\Sigma_{2,Y}}$ is not nearly singular.
\end{remark}

The last term of Eq. (\ref{eq-th1}) is the $\mathcal{H}\Delta\mathcal{H}$-divergence between the probability density function of clean feature $g(X)$ and poisoned feature $g(X')$. As shown in Definition \ref{def-hdiv}, $\mathcal{H}\Delta\mathcal{H}$-divergence represents the supremum of absolute probability gap on the true function over hypothesis space $\mathcal{H}\Delta\mathcal{H}$. 
As shown in Definition \ref{def-hspace}, $\mathcal{H} \Delta \mathcal{H}$ space is the symmetric difference space. For instance, with regard to neural network, this space represents the potential different outputs of the neural network on various weights. The $\mathcal{H}\Delta\mathcal{H}$ space reflects the expression power of the hypothesis space $\mathcal{H}$. Because when the $\mathcal{H}$ becomes more complex, there exists more types of $h\in \mathcal{H}$ and $h'\in \mathcal{H}$, such that $h(x)$ and $h'(x)$ represent different functions, making their difference becomes larger. 
Back to the $\mathcal{H}\Delta\mathcal{H}$-divergence, if the divergence between clean feature $Z=g(X)$ and poisoned feature $Z'=g(X')$ becomes larger, the results of $h \oplus h' (z)$ and $h \oplus h'(z')$ will possibly have much more difference. 
More simply, we can just regard $\mathcal{H}\Delta\mathcal{H}$ as $\mathcal{M}$, and the function $h \oplus h' $ as the function $m$. If the divergence of $Z$ and $Z'$ is small, the gap between $m(z)$ and $m(z')$ is small too. Conversely, if the divergence becomes larger, the gap of $m(z)$ and $m(z')$ will also become larger, resulting in larger $\mathcal{M}$-divergence. Therefore, the last term somehow represents a divergence between clean and unlearnable features. Furthermore, as the divergence between $Z$ and $Z'$ is getting larger, their MI is expected to become smaller because their relationship becomes weaker. Hence the $\mathcal{H}\Delta\mathcal{H}$-divergence is expected to harmouniouls change with the negative MI.

Theorem \ref{th-general} indicates that generalization upper bound
under distribution $\D$ increases as the MI between the distributions of features $Z$ and $Z'$ decreases. The condition of $\Sigma_{2,Y}$ is easy to hold (see Remark \ref{re-noise}). Unlearnable examples aim to degrade the generalization on clean distribution $\D$ for classifier $f$ trained on poisoned distribution $\D'$,   i.e.,   increasing $\R_{\D}(f)$. Notably,   $\R_{\D'}(f)$ remains minimal
since the training set sampled from $\D'$. Furthermore,   $H(Y)$ remains constant,   as the setting of unlearnable examples does not allow label poisoning. Additionally,   the $\H \Delta \H$-divergence 
 somehow represents a divergence between clean and unlearnable features, that can harmoniously change with the negative MI. Therefore,  we focus primarily on the MI term $I(Z,Z')$.
In this context,   Theorem \ref{th-general} employs the MI term to establish the generalization upper bound when trained on unlearnable examples.

To ensure the poisoning effect of unlearnable examples,   namely higher $\R_{\D}(f)$,   the post-poisoning distribution $\D'$ must exhibit a decrease of mutual information $I(Z,   Z')$. In the next section,   we will demonstrate through experiments that all effective unlearnable examples indeed imply the reduction of MI,   validating the correctness and rationality of our proposed theoretical framework.

\section{Proofs}
\label{app:th-proof}

\begin{lemma}[\cite{cover1999elements}]
\label{lemma-gauss}
If a random variable $X$ obeys the multivariate Gaussian distribution $\N(\mu,   \Sigma)$,   then the entropy of $X$ is 
\begin{align}
H(X) = \frac{1}{2} \log\big[\det{(2\pi e \Sigma)}\big].
\end{align}
\end{lemma}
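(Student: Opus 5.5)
This is the standard Gaussian entropy formula, so the plan is the classical maximum-entropy-free direct computation via the density. Write $p(x) = (2\pi)^{-d/2}\det(\Sigma)^{-1/2}\exp\!\big(-\tfrac12 (x-\mu)^T\Sigma^{-1}(x-\mu)\big)$ and use the definition $H(X) = -\mathbb{E}[\log p(X)]$.

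The computation proceeds in three steps.

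\emph{Step 1.} Taking the logarithm splits $-\log p(X)$ into two pieces. The first is a constant term, $\tfrac{d}{2}\log(2\pi) + \tfrac12\log\det\Sigma$. The second is the quadratic term $\tfrac12 (X-\mu)^T\Sigma^{-1}(X-\mu)$.

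\emph{Step 2.} I evaluate the expectation of the quadratic term with the trace trick:
\begin{align}
\mathbb{E}\big[(X-\mu)^T\Sigma^{-1}(X-\mu)\big] = \mathrm{tr}\big(\Sigma^{-1}\mathbb{E}[(X-\mu)(X-\mu)^T]\big) = \mathrm{tr}(I_d) = d. \nonumber
\end{align}
This gives the additive contribution $\tfrac{d}{2} = \tfrac{d}{2}\log e$.

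\emph{Step 3.} I combine the pieces:
\begin{align}
H(X) = \tfrac{d}{2}\log(2\pi) + \tfrac{d}{2}\log e + \tfrac12\log\det\Sigma = \tfrac12\log\big[(2\pi e)^d\det\Sigma\big]. \nonumber
\end{align}
Since $\det(c\Sigma) = c^d\det\Sigma$ for a $d\times d$ matrix, this equals $\tfrac12\log\det(2\pi e\Sigma)$.

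The only points needing care are that $\Sigma$ must be positive definite so that the density exists, and that the differential entropy is taken with the natural logarithm so that $\log e = 1$. Neither is a real obstacle. Because the statement is the classical result cited from \cite{cover1999elements}, one could alternatively just invoke that reference. In the paper, the lemma will then be applied with $\Sigma = \Sigma_Y$ to produce the terms $\tfrac{d}{2}\log(2\pi e) + \tfrac12\log\det\Sigma_Y$ appearing in Theorem~\ref{th-gauss}.
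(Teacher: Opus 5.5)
Your proposal is correct and follows the standard argument. The paper gives no proof of its own and simply cites Cover and Thomas, where the entropy is derived the same way: expand $-\log p(X)$ and use $\mathbb{E}[(X-\mu)^T\Sigma^{-1}(X-\mu)] = \mathrm{tr}(\Sigma^{-1}\Sigma) = d$. Your natural-logarithm caveat is unnecessary: in any base the quadratic term carries a factor $\log e$, so $\frac{1}{2}\log\det(2\pi e\Sigma)$ holds regardless of base.
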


\begin{theorem}[Restate of Theorem \ref{th-gauss}]
\label{th-gauss-app}
Assume that for every $Y\in\Y$, poison distribution $Z'|Y$ is close to a Gaussian mixture distribution under KL-divergence, i.e., there exists $\N(\mu_Y,  \Sigma_Y)$, such that $\KL(\N(\mu_Y,  \Sigma_Y)\| p(Z'|Y))\leq \epsilon$ for some $0<\epsilon<1$.
Then, we have: 
\begin{align}
\!\! I(Z,   Z') \leq \frac{d}{2} \log(2\pi e) + \frac{1}{2} \mathbb{E}_{Y} \log(\det{\Sigma_Y})+ H(Z'|Z) + \mathbb{E}_{Y} C_Y\sqrt{\epsilon}, 
\end{align}
where $C_Y=\sqrt{2}\max\limits_{u\in[m_Y,M_Y]}|\log u|+1, m_Y=\min\limits_{Z'} p(Z'|Y), M_Y=\max\limits_{Z'} p(Z'|Y), I(\cdot,   \cdot)$ denotes the MI,    $H(\cdot| \cdot)$ denotes the conditional entropy,   and $d$ is the feature dimension. 
\end{theorem}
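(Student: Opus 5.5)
The plan is to start from the identity $I(Z,Z') = H(Z') - H(Z'|Z)$ and bound the marginal entropy $H(Z')$ by a class-conditional Gaussian entropy. This should produce the $\log\det\Sigma_Y$ term together with a KL-controlled error. The sign of $H(Z'|Z)$ in the statement is $+$. Because $I = H(Z') - H(Z'|Z)$, I would simply use $-H(Z'|Z)\le H(Z'|Z)$ when $H(Z'|Z)\ge 0$ (or accept the weaker stated form) and concentrate on the term $H(Z')$. Here $Z=g(X)$ and $Z'=g(X')$.

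First, since conditioning can only reduce entropy on average up to the label information, I would use
\[
H(Z') = H(Z'|Y) + I(Z',Y) \le H(Z'|Y) + H(Y).
\]
I would rather avoid carrying $H(Y)$, however. The stated bound suggests the authors effectively replace $H(Z')$ by $\mathbb{E}_Y H(Z'|Y)$, treating the mixture entropy through its components. I would follow that route, writing $H(Z') \approx \mathbb{E}_Y H(Z'|Y=y)$ and absorbing or ignoring the label term. This is the step where rigor is weakest.

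Next, for each fixed $Y=y$, let $q=\N(\mu_y,\Sigma_y)$ and $p=p(Z'|Y=y)$. By Lemma~\ref{lemma-gauss},
\[
H(q) = \tfrac d2\log(2\pi e) + \tfrac12\log\det\Sigma_y.
\]
It remains to bound $H(p)-H(q)$. I would write
\[
H(p)-H(q) = -\int p\log p + \int q\log q = \int (q-p)\log p + \KL(q\|p).
\]
This gives an entropy-difference identity that uses exactly the KL direction in the hypothesis.

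For the first term, I would bound $|\int (q-p)\log p|$ by $\|q-p\|_1\cdot\max_{u\in[m_y,M_y]}|\log u|$. This uses that $\log p$ takes values in $\log[m_y,M_y]$. Pinsker's inequality then gives $\|q-p\|_1\le\sqrt{2\KL(q\|p)}\le\sqrt{2\epsilon}$. The second term satisfies $\KL(q\|p)\le\epsilon\le\sqrt\epsilon$, since $0<\epsilon<1$. Together these give
\[
H(p)\le H(q) + \bigl(\sqrt2\max|\log u|+1\bigr)\sqrt\epsilon = H(q) + C_y\sqrt\epsilon,
\]
which matches the stated $C_Y$ exactly. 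This confirms the intended route.

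Finally, I would take the expectation over $Y$ and substitute into $I(Z,Z')=H(Z')-H(Z'|Z)$. The main obstacle is the passage from $H(Z')$ to $\mathbb{E}_Y H(Z'|Y)$, which in general costs $I(Z';Y)\le H(Y)$. I would either accept that this step mirrors the paper's informal treatment of the "Gaussian mixture" assumption, or note that the slack between $-H(Z'|Z)$ and $+H(Z'|Z)$ must absorb it. A secondary issue is that $m_y>0$ requires the density to be bounded away from $0$, so the support must be compact; I would simply assume this.
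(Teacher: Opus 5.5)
\textbf{Verdict: there is a genuine gap.} Your overall skeleton (marginal entropy of $Z'$ minus $H(Z'|Z)$, then a per-class comparison with $\N(\mu_Y,\Sigma_Y)$) is the paper's. Your per-class estimate is identical to the paper's: the Gaussian entropy formula, the identity $H(P_Y)-H(Q_Y)=\int(q_Y-p_Y)\log p_Y+\KL(Q_Y\|P_Y)$, Pinsker, and $\epsilon\le\sqrt{\epsilon}$, giving the same $C_Y$. However, the proposal does not prove the stated inequality. The two bridging steps you flag cannot be closed from the hypotheses.

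First, $H(Z')\le\mathbb{E}_Y H(Z'|Y=y)$ goes the wrong way. In fact $H(Z')=\mathbb{E}_Y H(Z'|Y=y)+I(Z';Y)$, and the excess is close to $H(Y)$ for well-separated classes. The paper bridges this with class separability, asserting $I(Z,Z')=\mathbb{E}_Y I(Z|Y,Z'|Y)$. But its density-ratio computation drops the factor $1/p_Y(y)$ inside the logarithm. When $Y$ is a function of $Z$, the correct identity is $I(Z;Z')=\mathbb{E}_Y I(Z;Z'|Y=y)+I(Z';Y)$; with disjoint class supports for both $Z$ and $Z'$, the two sides of the paper's identity differ by exactly $H(Y)$. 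So borrowing the paper's argument would not rescue your step.

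Second, your decomposition and the paper's both yield $-H(Z'|Z)$. The paper's proof ends there and never justifies the $+$ in the statement. Your ``slack'' would require $2H(Z'|Z)\ge I(Z';Y)$, or at least $H(Z'|Z)\ge 0$. That fails for differential entropy: for $Z'=Z+\sigma N$ with $N\sim\N(0,I_d)$ independent of $Z$, $H(Z'|Z)=\frac{d}{2}\log(2\pi e\sigma^2)<0$ once $\sigma^2<1/(2\pi e)$.

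What your correct steps do establish is
\[
I(Z,Z')\le\frac{d}{2}\log(2\pi e)+\frac{1}{2}\mathbb{E}_Y\log\det\Sigma_Y+\mathbb{E}_Y C_Y\sqrt{\epsilon}+H(Y)-H(Z'|Z),
\]
which is the defensible form of the theorem. Since $H(Y)$ is fixed under clean-label poisoning, and the paper's remark already treats $H(Z'|Z)$ as fixed, this still isolates $\Sigma_Y$ as the controllable term, which is all the paper uses.

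Also, do not ``simply assume'' compact support. The hypothesis $\KL(\N(\mu_Y,\Sigma_Y)\|p(Z'|Y))<\infty$ forces $p(Z'|Y)>0$ almost everywhere on $\mathbb{R}^d$, so $\inf p(Z'|Y)=0$ and $C_Y=+\infty$. The paper shares this defect. A finite constant needs a different estimate of $\int|q_Y-p_Y|\,|\log p_Y|$ rather than $\sup|\log p_Y|$, e.g.\ Cauchy--Schwarz with second moments of $\log p_Y$ under $P_Y$ and $Q_Y$, at the price of an $\epsilon^{1/4}$ rate.
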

\begin{proof}
Let $P_Y=Z'|Y, Q_Y=\N(\mu_Y,  \Sigma_Y)$.
As we consider the single-label classification task,   for any feature $Z=g(X) \sim \Z$,   there exists an unique label $Y$ such that $p(Z|Y)\neq0$. As UE attack is the clean-label attack,   $Z$ and $Z'$ are always assigned with the same label.
It holds that
\begin{align}
I(Z,  Z') &=\int_{z\sim \Z, z'\sim\Z'} p_{Z,Z'}(z,  z')\log(\frac{p_{Z,Z'}(z,  z')}{p_Z(z)p_{Z'}(z')})dzdz' \nonumber\\
&=\int_{z\sim \Z_y,   z'\sim \Z'_y,   y\sim \D_{\Y}}p_Y(y)p_{Z|Y,Z'|Y}(z|y,  z'|y)\log(\frac{p_{Z|Y,Z'|Y}(z|y,  z'|y)}{p_{Z|Y}(z|y)p_{Z'|Y}(z'|y)})dzdz'dy \nonumber \\
&=\mathbb{E}_{Y}I(Z|Y,  Z'|Y). \nonumber
\end{align}
Therefore,   
by the equation between MI and entropy,   it has
\begin{align}
I(Z,   Z')&=\mathbb{E}_{Y \sim\D_{\Y}} I(Z|Y,  Z'|Y) \label{eq-mi}\\
&=\mathbb{E}_{Y \sim\D_{\Y}} \big[H(Z|Y) - H(Z'|Z,  Y)\big] \nonumber\\
&= \mathbb{E}_{Y \sim\D_{\Y}} [H(Z'|Y)] - H(Z'|Z) \nonumber \\
&= \mathbb{E}_{Y \sim\D_{\Y}} [H(P_Y)] - H(Z'|Z)
\end{align}
As $Q_Y=\N(\mu_Y,  \Sigma_Y)$ be the multivariate normal distribution with mean $\mu_Y$ and covariance $\Sigma_Y$,   by Lemma \ref{lemma-gauss},   it holds that the entropy of $$H(Q_Y) = \frac{1}{2}\log \big[ (2\pi e)^d \det{\Sigma_Y} \big] = \frac{d}{2}\log(2\pi e) + \frac{1}{2}\log(\det{\Sigma_Y}).$$
By Pinsker's inequality, it holds that 
$${\rm TV}(P_Y, Q_Y)\leq \sqrt{\frac{1}{2}\KL(Q_Y\|P_Y)}\leq\sqrt{\frac{\epsilon}{2}}.$$

Furthermore, it has 
\begin{align}
H(P_Y)-H(Q_Y) &=\int\left[-p_Y(z)\log p_Y(z)+q_Y(z)\log q_Y(z)\right]dz \nonumber\\
&= \int [q_Y(z)-p_Y(z)] \log p_Y(z)dz + \KL(Q_Y\|P_Y) \nonumber\\
&\leq \max_z|\log p_Y(z)|\int |q_Y(z)-p_Y(z)|dz +\epsilon \nonumber\\
&\leq 2\max\limits_{u\in[m_Y, M_Y]}|\log u| \cdot {\rm TV}(P_Y,Q_Y)+\epsilon \nonumber\\
& \leq (\sqrt{2}\max\limits_{u\in[m_Y,M_Y]}|\log u|+1)\sqrt{\epsilon} \nonumber \\
&= C_Y \sqrt{\epsilon}.
\end{align}
Then the inequality holds when taking the expectation for every $Y\sim\D_\Y$.
\end{proof}

\begin{lemma}[\citep{zhao2021domain}]
\label{lemma-general}
For two different data distribution $\D$ and $\D'$, $X\sim\D_\X, X'\sim\D_{\X'}$.   Denote the classifier $f=h\circ g$,   in that $g$ is the feature extractor and $h$ is the linear classifier. Then it holds that 
\begin{align}
\R_{\D'}(f)\leq \R_{\D}(f) - 4 I_{\frac{\D+\D'}{2}}(Z_1,   Z_2) + 4 H(Y) + \frac{1}{2}d_{\H \Delta \H}(p(g(X)),   p(g(X')),  
\end{align}
where $\frac{\D+\D'}{2}$ represents the average mixture of distribution $\D$ and $\D'$, $Z_1=g(X_1), Z_2=g(X_2)$, $X_1$ and $X_2$ are sampled from $\frac{\D+\D'}{2}$.  $\R_{\D}$ is the expected risk,   $d_{\H \Delta \H}(\cdot,   \cdot)$ is the $\H \Delta \H$-divergence.
\end{lemma}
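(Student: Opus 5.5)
We would prove Lemma \ref{lemma-general} in the style of \citet{ben2010theory}: first obtain a domain-adaptation bound with an $\H\Delta\H$ term plus a "labeling mismatch" term, and then control the mismatch by information-theoretic quantities on the mixture $\frac{\D+\D'}{2}$. Write $f_{\D},f_{\D'}$ for the (possibly stochastic) labeling functions of the two domains, and let $\epsilon_{\D}(f,f')=\mathbb{E}_{\D}|f-f'|$.

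\textbf{Step 1 (Ben-David decomposition).} Using the triangle inequality for $\epsilon$, write
\[
\R_{\D'}(f)\le \epsilon_{\D'}(f,f_{\D})+\epsilon_{\D'}(f_{\D},f_{\D'}) .
\]
Next, replace $\epsilon_{\D'}(f,f_{\D})$ by $\epsilon_{\D}(f,f_{\D})=\R_{\D}(f)$. Since $h$ is linear on features, the disagreement region $\{f\neq h'\circ g\}$ lies in $\H\Delta\H$ on the feature space. Hence the gap is at most $\frac12 d_{\H\Delta\H}(p(g(X)),p(g(X')))$. This yields
\[
\R_{\D'}(f)\le \R_{\D}(f)+\tfrac12 d_{\H\Delta\H}+\lambda ,
\]
where $\lambda$ is a cross-domain labeling term. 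We would bound $\lambda$ by twice the sum of the optimal risks on $\D$ and $\D'$ of a predictor built on $Z=g(X)$, i.e.\ by $4\,\R^{*}_{\frac{\D+\D'}{2}}(Y\mid Z)$, the Bayes risk on the mixture.

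\textbf{Step 2 (from Bayes risk to entropy).} Bound the Bayes error of predicting $Y$ from $Z$ under the mixture by the conditional entropy. For log base $2$ (or natural log with a constant absorbed), the elementary inequality $1-\max_y p(y|z)\le H(Y|Z=z)$ gives $\R^{*}\le H(Y\mid Z)$. Therefore $\lambda\le 4H(Y\mid Z)=4H(Y)-4I(Y;Z)$ under $\frac{\D+\D'}{2}$.

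\textbf{Step 3 (from $I(Y;Z)$ to $I(Z_1,Z_2)$).} Take $X_1,X_2$ drawn from the mixture with a shared label and conditionally independent given $Y$. Then $Z_1\to Y\to Z_2$ is a Markov chain, so the data processing inequality gives $I(Z_1;Z_2)\le I(Z_1;Y)=I(Y;Z)$. Substituting this yields $-4I(Y;Z)\le -4I_{\frac{\D+\D'}{2}}(Z_1,Z_2)$, and hence the claimed bound.

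The main obstacle is Step 1's control of $\lambda$ by a Bayes risk on the mixture with exactly the constant $4$. Concretely, one must check that the risk on $\D$ plus the risk on $\D'$ equals twice the mixture risk, and that the triangle-inequality chain contributes a further factor of $2$. A secondary obstacle is making Step 2's entropy bound hold with unit constant, which requires fixing the logarithm base. If direct attempts fail, we would simply follow the corresponding derivation in \citet{zhao2021domain} for both constants.
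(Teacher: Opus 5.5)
The paper gives no proof of this lemma; it is quoted from \citet{zhao2021domain}, so your outline has to close on its own. Steps 2 and 3 are sound under your reading of $(Z_1,Z_2)$ as a same-label pair that is conditionally independent given $Y$. The inequality $1-\max_y p(y\mid z)\le-\ln\max_y p(y\mid z)\le H(Y\mid Z=z)$ holds in nats, hence also in bits. The chain $Z_1\to Y\to Z_2$ gives $I(Z_1;Z_2)\le I(Z;Y)\le H(Y)$. That reading is essential: when the paper applies the lemma in Theorem~\ref{th-general} it takes $X_1=X_2$, and for $Z_1=Z_2$ your Step 3 inequality reverses. The constant $4$ is not the real difficulty. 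With the correct pivot (below), the residual term is $\lambda=2\min_{h'\in\H}\R_{\frac{\D+\D'}{2}}(h'\circ g)$, with no extra factor $2$ from the triangle inequality. Once $\lambda$ is at most twice the mixture Bayes risk, you get $2\big(H(Y)-I(Z_1;Z_2)\big)\le 4\big(H(Y)-I(Z_1;Z_2)\big)$, since the bracket is nonnegative.

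The genuine gap is Step 1. The standard, label-free $\H\Delta\H$-divergence between $p(g(X))$ and $p(g(X'))$ controls $|\Pr_{\D}(A)-\Pr_{\D'}(A)|$ only for sets $A=\{h\neq h'\}$ with \emph{both} $h,h'$ linear on features.

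\begin{itemize}
\item Your move from $\epsilon_{\D'}(f,f_{\D})$ to $\epsilon_{\D}(f,f_{\D})$ uses $A=\{f\neq f_{\D}\}$, and $f_{\D}$ is not linear.
\item The legitimate pivot is through an ideal joint hypothesis $h^*\in\H$, as in \citet{ben2010theory}. It leaves $\lambda=\min_{h'\in\H}\big[\R_{\D}(h'\circ g)+\R_{\D'}(h'\circ g)\big]$. This is bounded \emph{below} by $2\R^*_{\frac{\D+\D'}{2}}(Y\mid Z)$, not above.
\item Pivoting through the nonlinear Bayes classifier instead puts its disagreement set with $f$ outside $\H\Delta\H$.
\end{itemize}

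This cannot be repaired, because with linear $h$ the statement is false. Let $g$ be the identity on $\{1,\dots,5\}\subset\mathbb{R}$, and let $Y=1$ exactly when $Z\in\{2,4\}$, in both domains. Take $p(g(X))=(0.25,0.1,0.3,0.1,0.25)$, $p(g(X'))$ uniform, and $f\equiv 0$ (a threshold outside the support).

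\begin{itemize}
\item Then $\R_{\D}(f)=0.2$ and $\R_{\D'}(f)=0.4$.
\item On these points $\H\Delta\H$ consists of the discrete intervals and their complements. On all of them the two marginals differ by at most $0.1$, so $\frac12 d_{\H\Delta\H}=0.1$. The set $\{2,4\}$, where they differ by $0.2$, is not of that form.
\item Since $Y$ is a function of $Z$, $I(Z_1;Z_2)=H(Y)$.
\end{itemize}

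So the right-hand side equals $0.3<0.4$. You therefore need an extra hypothesis. One option is that the Bayes classifier of $Y$ from $Z$ under $\frac{\D+\D'}{2}$ lies in $\H$. Another is to take the divergence over a class containing that classifier, e.g.\ total variation. Then pivoting through it gives exactly $\lambda=2\R^*_{\frac{\D+\D'}{2}}(Y\mid Z)$, and your Steps 2--3 finish the proof.

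Read literally, Definition~\ref{def-hdiv} makes $I(h)$ depend on labels, which would change the picture. However, it differs from Ben-David et al.'s $x\in I(h)\Leftrightarrow h(x)=1$. It also cannot be evaluated on the label-free marginals in the lemma, and your own justification already assumes the standard version.
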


\begin{proof}[Proof of Theorem \ref{th-general}]
By Lemma \ref{lemma-general},   it holds that
$$\R_{\D}(f)\leq \R_{\D'}(f) - 4 I(g(\frac{X+X'}{2}),   g(\frac{X+X'}{2})) + 4 H(Y) + \frac{1}{2}d_{\H \Delta \H}(P(Z),   P(Z')).$$
Therefore,   we only need to prove that 
$$I(Z,   Z') \leq I(g(\frac{X+X'}{2}),   g(\frac{X+X'}{2}))$$
to make the inequality in the theorem holds.
By the property of mutual information,   it holds that
$$I(Z,   Z') = H(Z') - H(Z'| Z)$$
and 
$$I(g(\frac{X+X'}{2}),   g(\frac{X+X'}{2})) = H(g(\frac{X+X'}{2}))\geq H(\frac{Z+Z'}{2})$$ by the assumption.
Hence what we need to prove is 
$$H(Z') - H(Z'| Z) \leq H(\frac{Z+Z'}{2}).$$
As we are considering the single-label classification task,   for any feature $Z=g(X) \sim \Z$,   there exists an unique label $Y$ such that $p(Z|Y)\neq0$, it holds that 
\begin{align}
H(Z) &=\int_{z \sim \Z} p_Z(z) \log p_Z(z) dz \nonumber\\
&= \int_{z \sim \Z_y, y \sim \D_{\Y}} p_Y(y)p_{Z|Y}(z|y) \log p_{Z|Y}(z|y) dzdy \nonumber \\
&= \mathbb{E}_{\D_{\Y}} H(Z|Y), \nonumber
\end{align}
Therefore, it has
$$H(Z')=\mathbb{E}_Y H(Z'|Y).$$
Similarly, it holds that
$$H(Z'| Z)=\mathbb{E}_y H(Z'| Z,Y), $$
$$H(\frac{Z+Z'}{2})=\mathbb{E}_Y H(\frac{Z|Y+Z'|Y}{2}).$$
We only need to prove that for all $Y\in \D_{\Y}$,
$$H(Z'|Y) - H(Z'| Z,Y) \leq H(\frac{Z|Y+Z'|Y}{2}).$$
By Lemma \ref{lemma-gauss},   it has

$$H(Z'|Y)=H(\N(\mu_{1,Y},  \Sigma_{1,Y})+\N(\mu_{2,Y},  \Sigma_{2,Y}))=\frac{1}{2}\log \big[(2\pi e)^d\det{(\Sigma_{1,Y}+\Sigma_{2,Y})} \big],  $$
$$H(Z'| Z, Y) = H(Z'| Z,Y) = H(\N(\mu_{2,Y},  \Sigma_{2,Y}))\frac{1}{2}\log\big[(2\pi e)^d\det{(\Sigma_{2,Y})}\big].$$
\begin{align}
H(\frac{Z|Y+Z'|Y}{2}) &=H(\frac{1}{2}Z|Y+\frac{1}{2}\N(\mu_2,  \Sigma_2))\nonumber \\
&=\frac{1}{2}\log\big[(2\pi e)^d\det{(\frac{2\Sigma_{1,Y}+\Sigma_{2,Y}}{4}})\big]. \nonumber
\end{align}

Thus we need to have the condition provided in the theorem:
$$\det{(\Sigma_{1,Y}+\Sigma_{2,Y})} \leq (\frac{\pi e}{2})^d \cdot \det{(\Sigma_{2,Y})} \cdot \det{(2\Sigma_{1,Y}+\Sigma_{2,Y})}.$$

Furthermore,   when $\det{\Sigma_{2,Y}}\geq (\frac{2}{\pi e})^d$,   the above condition can be relaxed to 
$$\det{(\Sigma_{1,Y}+\Sigma_{2,Y})}\leq\det{(2\Sigma_{1,Y}+\Sigma_{2,Y})},  $$
as covariance matrix is always semi-definite,   the eigenvalues of $2\Sigma_{1,Y}+\Sigma_{2,Y}$ will always be non-less than those of $\Sigma_{1,Y}+\Sigma_{2,Y}$,   resulting in the above inequality holds.
\end{proof}

\section{More Discussions on Related Work}
\label{app:related}

\textbf{Unlearnable examples.}
The safety and trustworthiness of machine learning are becoming increasingly crucial~\citep{szegedy2013intriguing, carlini2017towards, miao2022isometric, cheng2024efficient, miao2024improving, miao2024t2vsafetybench, cao2025red},
with privacy issues having received extensive attention in the domain of trustworthy machine learning~\citep{shokri2015privacy, abadi2016deep, shokri2017membership}, including studies on unlearnable examples~\citep{huang2020unlearnable, yu2022availability, sandoval2022poisons, wen2023adversarial, li2023make, lin2024safeguarding, zhu2024toward, meng2024semantic, wu2025temporal, yu2025mtl}. 
Unlearnable examples(UEs) are a type of data poisoning, which allow the attacker to perturb the training dataset under a small norm restriction.
These attacks aim to induce errors during test-time while maintaining the semantic integrity and ensuring the normal usage by legitimate users.
Many UEs are proposed in recent years,   include generating bi-level error-minimizing noises~\citep{huang2020unlearnable},   using non-robust features for strong adversarial poisons~\citep{fowl2021adversarial},   attacking by neural tangent kernels~\citep{yuan2021neural},   inducing linear shortcuts~\citep{yu2022availability},   fooling convolutional networks by autoregressive signals~\citep{sandoval2022autoregressive},   injecting class-wise separability discriminant~\citep{ren2022transferable},   solving Stackelberg equilibrium of UE game~\citep{liu2024game}.
Furthermore,  UEs have been extended to adversarial training
~\citep{fu2022robust,  wen2023adversarial,  liu2024stable},  
self-supervised learning~\citep{he2022indiscriminate,   ren2022transferable},   unsupervised learning~\citep{zhang2023unlearnable},   natural language processing~\citep{li2023make}, multimodal contrastive learning~\citep{liu2024multimodal},  securing medical data~\citep{sun2024medical}, 3D point clouds~\citep{zhu2024toward, wang2024unlearnable}.
Recent works try to detect UEs by linear separability~\citep{zhu2024detection}, iterative filtering~\citep{yu2024unlearnable}, edge pixel detector~\citep{li2025detecting}, and dataset watermarking~\citep{zhu2025provable}.
Defense methods for UEs have also been developed recently,   including adversarial training~\citep{tao2021better},   shortcuts squeezing~\citep{liu2023image},   orthogonal projection~\citep{sandoval2024can},   stronger data augmentations~\citep{qin2023learning,   zhu2024detection},   diffusion purification~\citep{jiang2023unlearnable,   dolatabadi2023devil, yu2024purify}.
Discussions and measures of UEs across multiple tasks have also been studied recently~\citep{ye2025far}.

\section{Algorithm}
Our generating process of Mutual Information Unlearnable Examples (MI-UE) has provided in Algorithm \ref{alg:fc-em}. It is shown that our MI-UE adopts the bi-level min-min optimization, at each epoch, we first update the source model to mimic the training process of victim models who use UEs as their training set.
Then we update our MI-UE poisons using $\L_{\mi}$ by PGD attacks. Source model optimization tries to make $\R_{\D'}(f)$ minimal since $f$ in trained on the UE dataset sampled from $\D'$. UE poison optimization tries to minimize the feature MI between $g(X)$ and $g(X')$ to increase the generalization upper bound of clean data distribution $\R_\D(f)$ to enhance the unlearnability.

\label{alg}
\begin{algorithm}[H]
\caption{Mutual Information Unlearnable Examples (MI-UE)}
\label{alg:fc-em}
\begin{algorithmic}
\State{\bfseries Input:} A training dataset $D=\{(x_i,   y_i)\}_{i=1}^N$. 
Total epoch $T$. Batch size $N_B$.
MI reduction loss $\L_{\mi}$. 
Model optimization parameters $\alpha_{\theta}$ and $T_\theta$. UEs optimization parameters $\alpha_{\delta}$,   $T_\delta$ and $T_a$,   poison budget $\epsilon$.

\State {\bfseries Output:} 
Poisons $\{\delta_i\}_{i=1}^N$

\State {\bfseries Initialize:} $\delta_i\gets 0,   i=1,  2,  \cdots,  N$  

\For{$t=1,  \cdots,  T$}
    \For{$t_\theta = 1,  \cdots,   T_\theta$} 
    \Comment{Source model optimization}
        \State
        Sample a mini batch $B=\{(x_{b_j},   y_{b_j})\}_{j=1}^{N_B}$.   
        \State 
        $\theta \gets \theta - \alpha_{\theta} \cdot \nabla_\theta  \mathbb{E}_{(x_{b_j},   y_{b_j})\in B } \big[ \L_{ce}(x_{b_j} + \delta_{b_j},   y_{b_j};\theta)\big]$  
    \EndFor
    
    \For {$t_\theta = 1,  \cdots,   T_\delta$}  
    \Comment{Unlearnable examples optimization}
    \State Sample a mini batch $B=\{(x_{b_j},   y_{b_j})\}_{j=1}^{N_B}$.
       \For{$t_a = 1,  \cdots,   T_a$} \Comment{PGD attacks}
        \State $\delta_{b_j} \gets \delta_{b_j} - \alpha_{\delta} \cdot \nabla_{\delta_{b_j}} \mathbb{E}_{(x_{b_j},   y_{b_j}) \in B } \big[ \L_{\mi}(x_{b_j}+\delta_{b_j},  y_{b_j}; \theta)\big]$
        \State $\delta_{b_j} \gets \Pi (\delta_{b_j},   -\epsilon,   \epsilon)$ \Comment{Clip poisons to $\epsilon$-ball}
        \EndFor
    \EndFor
\EndFor

\end{algorithmic}
\end{algorithm}

\section{Experimental Details}
\label{app:exp}
\subsection{Datasets}

\textbf{CIFAR-10/100.}
CIFAR-10 and CIFAR-100~\citep{krizhevsky2009learning} contain 50000 training images and 10000 test images,   with 10 and 100 classes respectively. The image size is $32\times32$ with 3 color channels.

\textbf{ImageNet-subset.}
ImageNet-subset contains the first 100 classes of the ImageNet-1k dataset from ImageNet ILSVRC~\citep{russakovsky2015imagenet}. It has 130000 images as the training set,   and 5000 images as the test set. The images have been processed to $224\times 224$ size with 3 color channels.

\subsection{Models}

\textbf{ResNet.} We use the Deep Residual Network~\citep{he2016deep} with 18 layers and 50 layers respectively,   denoted ResNet-18 and ResNet-50.

\textbf{DenseNet.} We employ the Densely Connected Convolutional Networks~\citep{huang2017densely} with 121 layers,   denoted as DesNet-121.

\textbf{Wide ResNet.} We conduct the Wide Residual Networks~\citep{zagoruyko2016wide} with depth be 34 and width factor be 10,   denoted as WRN34-10.

\textbf{ViT.} We use the vision transformer proposed by \citep{dosovitskiy2020image} with their base configuration,   denoted as ViT-B. For CIFAR-10/100,   we change the patch size from 16 to 4.

\subsection{Data Augmentation}
For CIFAR-10/100,   we include standard data augmentations for both UE generation and victim model evaluation.
We use the Random Crop with the size be 32 and padding be 4,   and the Random Horizontal Flip with probability be 0.5. For test set we do not conduct any data augmentation.

For ImageNet-subset,   as each raw image is with different size,   for both UE generation and victim model evaluation,   in the training set,   we directly resize the image to the size of  $224\times224$,   follow with the Random Horizontal Flip with probability be 0.5. In the evaluation phase,   for the test set,   we  first resize the image to the size of $256\times 256$,   then we conduct the Center Crop to the size of $224\times224$.

\subsection{Training Details}
For UE generation,   we set the total epoch $T$ be 100 for CIFAR-10/100,   and 50 for ImageNet-subset. The batch size $N_B$ is set to be 512. For source model optimization,   we use the cross-entropy loss and the mini-batch SGD optimizer,   with initial learning rate be 0.5,   momentum be 0.9 and weight decay be $1\times 10^{-4}$. We also conduct the cosine annealing schedule to adjust learning rate at each epoch,   with the final minimum learning rate be $1\times 10^{-6}$.
For poison optimization,   we use our proposed MI reduction loss with PGD attacks. The temperature of similarity term is set to be 0.1,   the strength of distance item is set to be 0.1 by default. We use 10 steps of PGD attack,   with each step size be 0.4/255 for ImageNet-subset,   and use 10 steps with each step size be 0.2/255 for CIFAR-10/100,   and the overall poison budget is set to be 8/255 under $l_{\infty}$ norm.

For victim model evaluation,   we set the training epoch be 200,   the batch size be 128 and use SGD optimizer with initial learning rate be 0.5,   momentum be 0.9 and weight decay be $1\times 10^{-4}$,   while adjusting learning rate by cosine annealing schedule with the final minimum learning rate be $1\times 10^{-6}$.
For adversarial training on victim models,   similar with \citep{fu2022robust} and \citep{liu2024stable},   we change the training epoch to 100,   and use the multi step learning rate scheduler,   with the learning rate decay by 0.1 at the 40-th and 80-th epoch.

\section{More Discussion and Experiments on Mutual Information}
\label{app-mi}
In the domain of UEs,   any feature $Z \sim \Z$ will have a unique label $Y$ such that the conditional probability $p(Z|Y)\neq 0$ because we consider the single-label classification task.   In other words,   the feature distribution $\Z$ is class-wise separable.
We assume that the data distribution is balanced,   i.e.,   the label distribution is uniform,  $p_Y(y)=\frac{1}{C}$, where $C$ is the number of classes.  The feature of $X$, under label $Y$, i.e.,  $g(X)|Y$ satisfies certain distribution.
This is reasonable since people usually use class-conditional distribution $p_{X|Y}(x|y)$ to depict data distribution for classification tasks~\citep{bishop2006pattern}.
Therefore,   a well-trained feature extractor $g$ will divide data from different labels into different feature region,   then a linear classifier $h$ can easily grasp features of each class to their corresponding logits.

In this case,   as shown in Equation (\ref{eq-mi}) in the proof of Theorem \ref{th-gauss},   the feature mutual information $I(g(X),   g(X'))$ can be divided into the class-conditional one,   i.e.,   $I(g(X),   g(X'))=\mathbb{E}_{Y} I(g(X)|Y,  g(X')|Y)$.
Therefore,   we evaluate the feature MI for data in each class,   and then take the average of them as the final estimation of $I(g(X),   g(X'))$.
For class-wise feature MI estimation,   as we assume that they satisfy certain distribution,   we can estimate them by various density estimators,   details are provided in the following section.

\subsection{Details on Estimation of Mutual Information}
\label{app:est-mi}
\textbf{Histogram-based estimator.}
Histogram is a traditional method for density estimation~\citep{pizer1987adaptive,  moddemeijer1989estimation},   which is to partition the data set into several bins and use the count of bins as the estimation of density. Precisely,   denote the (one-dimensional) data lies in $[0,  1]$,   the space is divided into $b$ disjoint bins,   written as $B_i=[(i-1)/b,   i/b],   i=1,  2,  \cdots,  b$. For $x\in B_i$,   the density $p(x)$ is estimated as
\begin{align}
p(x) = \frac{b}{n}\sum_{j=1}^n \mathbb{I}(X_j\in B_i), 
\end{align}
where $n$ is the number of samples.
In our paper,   we set the bin to be 100 by default. The MI is then estimated by corresponding marginal distribution $p(x),   p(y)$ and joint distribution $p(x,  y)$.

\textbf{Kernel density estimator.} 
Kernel density estimator~\citep{moon1995estimation} uses the KDE function
\begin{align}
p(x) = \frac{1}{nh}\sum_{i=1}^n K(\frac{x-X_i}{h})
\end{align}
as the estimation of one-dimensional data $x$,   where $K(\cdot)$ is the kernel function and $h$ is the bandwidth. In our paper,   we use the Gaussian kernel and Silverman rule-of-thumb bandwidth estimator \citep{silverman2018density} for the bandwidth selection.
The MI is then estimated by corresponding marginal distribution $p(x),   p(y)$ and joint distribution $p(x,  y)$.

\textbf{$k$-NN estimator.} 
$k$-nearest neighbor~\citep{kozachenko1987sample,  kraskov2004estimating} is another estimator for entropy and MI. 
They first compute the $k$-th neighbor distance $\rho_{i,  k}$ on joint distribution for each data point $Z_i=(X_i,  Y_i),   i=1,  \cdots,  N$,   and then compute the number of neighbors for each $X_i$ and $Y_i$ respectively,   denoted as $N_{X_i}$ and $N_{Y_i}$. In detail,   it holds that
$$N_{X_i}=\big| \{ x_j; d(x_i,  x_j) < \rho_{i,  k},   i\neq j \}\big|.$$
After that the mutual information $I(X,  Y)$ is estimated by 
\begin{align}
\hat{I}(X,  Y)=\psi(k)+\psi(N)-\frac{1}{N}\sum_{i=1}^n [\phi(N_{X_i}+1) + \phi(N_{Y_i}+1)],  
\end{align}
where $\psi(x)$ is the digamma function defined as $\psi(x)=\frac{d\Gamma(x)}{\Gamma(x)dx}$,   $\Gamma(\cdot)$ is the Gamma function. In this paper,   we set $k=3$ by default.

\textbf{Mutual information neural estimator (MINE).} MINE~\citep{belghazi2018mutual} proposed to use neural network as the estimator of MI,   by approximating a variational lower bound of MI under Donsker-Varadhan representation~\citep{donsker1983asymptotic}:
\begin{align}
\hat{I}(X,  Y) = \sup_{\theta} \mathbb{E}_{(X,  Y)\sim p(x,  y)}[T_{\theta}(X,  Y)] - \log\mathbb{E}_{X\sim p(x),   Y\sim p(y)}[e^{T_{\theta}(X,  Y)}],  
\end{align}
where $T_{\theta}$ is the function parameterized by neural network $\theta$ which can be trained on data points $X_i$ and $Y_i$. In this paper,   we set the batch size be 1000 and the training iteration be 500 when training the MINE.

\textbf{Sliced mutual information.} 
Sliced mutual information (SMI)~\citep{goldfeld2021sliced} is a surrogate measure of MI for high dimensional data,   which is defined as
\begin{align}
SI(X,  Y) = \frac{1}{S_{d_x-1}S_{d_y-1}}\oint_{\mathbb{S}^{d_x-1}}\oint_{\mathbb{S}^{d_y-1}} I(\theta^TX,   \phi^TY)d\theta d\phi,  
\end{align}
in that $\mathbb{S}^{d-1}$ is the $d$-dimensional unit sphere,   $S_{d-1}$ is the surface area of $\mathbb{S}^{d-1}$. In practice,   as shown on Algorithm 1 in \citep{goldfeld2021sliced},   SMI can be estimated by randomly sampled coefficient on $d_x$ and $d_y$-dimensional unit sphere,   then compute the one-dimensional MI based on other estimation methods. In this paper,   we set the slices number $m$ be 2000 by default.

\begin{table}[h]
\caption{Test accuracy and MI estimation on various UEs compared with clean CIFAR-10 dataset under defense mechanism including AT, Cutout, UER and ISS.} 
\label{tab-smi-def}
\small
\centering
\setlength{\tabcolsep}{1.2pt}
\begin{tabular}{cccccccccccccccc}
 \toprule
Defense & Method & Clean & Random& EM & AP & NTGA & AR & REM & SEM & GUE & TUE & MI-UE  \\
\midrule

\multirow{4}{*}{AT}& Acc(\%) & 85.10&85.07&84.57&82.70&84.22&84.54&85.99&85.99&84.37&84.10&70.56 \\
& Acc Gap(\%) &  -	&0.03&0.53&2.40&0.88&0.56&-0.79&-0.79&0.73&1.00&\textbf{14.54} \\
& MI &  0.7040&0.6516&0.6309&0.6348&0.6330&0.6377&0.6396&0.6323&0.6388&0.6400&0.6125 \\
& MI Gap & -&0.0524&0.0731&0.0692&0.0710&0.0663&0.0644&0.0717&0.0652&0.0640&\textbf{0.0915} \\
\hline
\multirow{4}{*}{Cutout}& Acc(\%) & 95.53&95.57&22.90&11.30&17.65&12.84&26.49&14.25&13.98&11.01&10.13 \\
& Acc Gap(\%) & -&-0.04&72.63&84.23&77.88&82.69&69.04&81.28&81.55&84.52&\textbf{85.40} \\
& MI & 0.7157&0.6739&0.6385&0.5992&0.5883&0.5820&0.6326&0.5753&0.5914&0.5937&0.4982 \\
& MI Gap &-&0.0418&0.0772&0.1165&0.1274&0.1337&0.0831&0.1404&0.1243&0.1220&\textbf{0.2175}\\
\hline
\multirow{4}{*}{UER}& Acc(\%) & 93.28&93.30&91.41&70.65&93.39&93.32&69.63&70.53&85.39&92.60&67.14 \\
& Acc Gap(\%) & -&-0.02&1.87&22.63&-0.11&-0.04&23.65&22.75&7.89&0.68&\textbf{25.14} \\
& MI & 0.7127&0.6725&0.6688&0.6323&0.6604&0.6701&0.6436&0.6260&0.6593&0.6526&0.5819 \\
& MI Gap & -&0.0402&0.0439&0.0804&0.0523&0.0426&0.0691&0.0867&0.0534&0.0601&\textbf{0.1308} \\
\hline
\multirow{4}{*}{ISS}& Acc(\%) & 82.71&82.66&82.78&82.50&80.84&82.79&82.59&81.86&83.10&82.61&81.35 \\
& Acc Gap(\%) & -&0.05&-0.07&0.21&\textbf{1.87}&-0.08&0.12&0.85&-0.39&0.10&1.36 \\
& MI & 0.7059&0.6709&0.6680&0.6543&0.6552&0.6598&0.6722&0.6532&0.6688&0.6683&0.6327 \\
& MI Gap & -&0.0350&0.0379&0.0516&0.0507&0.0461&0.0337&0.0527&0.0371&0.0376&\textbf{0.0732} \\
\bottomrule
\end{tabular}
\end{table}

\subsection{More Results on MI of UEs under Various Defenses}
In this section, we provide the quantitative results of MI and accuracy on several defense methods, including AT, Cutout, UER and ISS, for various UEs on CIFAR-10 dataset.

Results shown in Table \ref{tab-smi-def} that the Acc Gap have quite strong correlation with the MI gap. For instance, the traditional data augmentation technique, Cutout, which are ineffective, show similar MI gaps compared with standard training, that all of existing UEs obtain MI reduction, while our MI-UE achieves the largest. For adversarial training, all of existing UEs become ineffective as well as approaching MI gap compared with random noises, while our MI-UE achieves decent unlearnability as well as further MI drop. For tailored defenses (UER and ISS), similar trends also display, showcasing that MI gaps successfully reflect the unlearnability across different attacks and defense mechanisms.

\subsection{More Results on MI of UEs under Different Network Structures}
\label{app:diff-network}

In this section, we provide detailed results of the test accuracy and MI of EM, REM, GUE and MI-UE unlearnable examples compared with clean CIFAR-10 dataset on different networks, including Linear Models, 2-NN, 3-NN, LeNet-5, VGG-11 and ResNet-18. The results provided in Table \ref{tab-smi-linear} further show that shallower networks (such as Linear and 2-NN) exhibit relatively higher test accuracy, with lesser declines in test accuracy, and smaller reductions of MI. When the networks become deeper and more complex, the test accuracy on UEs become smaller, with the drop of test accuracy as well as the reduction of MI become greater.

\begin{table}[h]
\caption{Test accuracy and MI estimation between EM, REM, GUE and MI-UE unlearnable examples compared with clean CIFAR-10 dataset under victim linear classifiers (Linear),   two-layer neural network (2-NN),  three-layer neural network (3-NN), LeNet-5, VGG-11 and ResNet18.} 
\label{tab-smi-linear}
\small
\centering
\setlength{\tabcolsep}{3pt}
\begin{tabular}{cccccccccccccccc}
 \toprule
Method & Model & Linear & 2-NN & 3-NN & LeNet-5 & VGG-11 & ResNet-18  \\
\midrule
\multirow{2}{*}{Clean}& Acc(\%) & 39.13	&56.15	&62.41&	80.68	&91.44	&94.45 \\
& MI & 0.6682	&0.6850	&0.7197	&0.7222	&0.7583	&0.7122 \\
\hline
\multirow{4}{*}{EM}& Acc(\%) & 32.09	&32.5	&29.63	&26.30	&26.34	&24.17 \\
& Acc Gap(\%) &  -7.04	&-23.65	&-32.78	&-54.38	&-65.10	&-70.28 \\
& MI &  0.6310	&0.6396	&0.6718	&0.6647	&0.6865&	0.6400  \\
& MI Gap & -0.0372	&-0.0454	&-0.0479	&-0.0575&	-0.0718&	-0.0722 \\
\hline
\multirow{4}{*}{REM}& Acc(\%) & 34.37&	47.37	&48.22	&29.97&	27.09	&22.94 \\
& Acc Gap(\%) & -4.76	&-8.78	&-14.19	&-50.71	&-64.35	&-71.51 \\
& MI &  0.6304&	0.6503	&0.6776	&0.6567	&0.6778	&0.6290 \\
& MI Gap & -0.0378	&-0.0347	&-0.0421	&-0.0655&	-0.0805	&-0.0832\\
\hline
\multirow{4}{*}{GUE}& Acc(\%) & 33.41	&22.08	&17.33	&13.3	&15.8	&12.04 \\
& Acc Gap(\%) & -5.72	&-34.07	&-45.08	&-67.38	&-75.64	&-82.41 \\
& MI &  0.6307	&0.6274	&0.6472&	0.6286	&0.6455	&0.5895 \\
& MI Gap & -0.0375	&-0.0576	&-0.0725	&-0.0936&	-0.1128	&-0.1227\\
\hline
\multirow{4}{*}{MI-UE}& Acc(\%) & 36.60&	17.82&	11.43	&10.01	&10.98	&9.95 \\
& Acc Gap(\%) & -2.53	&-38.33	&-50.98	&-70.67	&-80.46	&-84.50 \\
& MI &  0.6305	&0.6273	&0.6504	&0.5892	&0.5997	&0.4969 \\
& MI Gap & -0.0377	&-0.0577	&-0.0693	&-0.1330&	-0.1586&	-0.2153 \\
\bottomrule
\end{tabular}
\end{table}

\subsection{More Results on MI of UEs under Different Network Depths}
\label{app:diff-depths}

 To further validate the influence of network depth, we evaluate the test accuracy and MI estimation between UE, REM, GUE and MI-UE unlearnable examples compared with clean CIFAR-10 dataset on ResNet-34, ResNet-50, ResNet-101 and ResNet-152. (parentheses mean the gap of given method compared with clean data.)

\begin{table}[h]
\caption{Test accuracy and MI estimation between EM, REM, GUE and MI-UE unlearnable examples compared with clean CIFAR-10 dataset under ResNet-18, ResNet-34, ResNet-50, ResNet-101 and ResNet-152.} 
\label{tab-smi-resnet}
\small
\centering
\setlength{\tabcolsep}{3pt}
\begin{tabular}{cccccccccccccccc}
 \toprule
Method & Model & ResNet-18&	ResNet-34&	ResNet-50	&ResNet-101&	ResNet-152  \\
\midrule
\multirow{2}{*}{Clean}&	Acc(\%)	&94.45	&94.63	&95.16	&95.43	&95.55 \\
& MI &0.7122	&0.7067	&0.7078	&0.7022	&0.7035 \\
\hline
\multirow{4}{*}{EM}& Acc(\%) & 24.17&	23.96&	23.57&	23.75&	23.32 \\
& Acc Gap(\%) &  -70.28 & -70.67& -71.59& -71.68	& -72.23 \\
& MI &  0.6400&0.6318&	0.6285&	0.614&	0.6149 \\
& MI Gap & -0.0722&-0.0749&-0.0793&-0.0876&-0.0886 \\
\hline
\multirow{4}{*}{REM}& Acc(\%) & 22.94&	22.99&	23.33&	21.12&	20.75 \\
& Acc Gap(\%) & -71.51&-71.64&-71.83&-74.31&-74.80 \\
& MI &  	0.6290&	0.6228&	0.6169&	0.6100&	0.6072 \\
& MI Gap & 	-0.0832&-0.0839&-0.0909&-0.0922&-0.0963\\
\hline
\multirow{4}{*}{GUE}& Acc(\%) &12.04&	12.02&	12.99&	12.93&	11.85 \\
& Acc Gap(\%) & -82.41&-82.61&-82.17&-82.50&-83.70 \\
& MI & 	0.5895&	0.5787&	0.5704&	0.5623&	0.5610 \\
& MI Gap & 	-0.1227&-0.1280&-0.1374&-0.1399&-0.1425\\
\hline
\multirow{4}{*}{MI-UE}& Acc(\%) & 9.95 &	9.97&9.98 &	9.99 &	9.90 \\
& Acc Gap(\%) & -84.50&-84.66&-85.18&-85.44&-85.65 \\
& MI &  	0.4969 &	0.4857 &	0.4802 &	0.4621 &	0.4338
 \\
& MI Gap & 	-0.2153&-0.2210&-0.2276&-0.2401&-0.2697
\\
\bottomrule
\end{tabular}
\end{table}

We also evaluate  Acc gap and MI gap with UE, REM, GUE, MI-UE unlearnable examples on CIFAR-10 dataset under three types of vision transformer proposed by \cite{dosovitskiy2020image}, namely, ViT-Base (depth=12), ViT-Large (depth=24) and ViT-Huge (depth=32). 

\begin{table}[h]
\caption{Test accuracy and MI estimation between EM, REM, GUE and MI-UE unlearnable examples compared with clean CIFAR-10 dataset under ViT-Base, ViT-Large and ViT-Huge.} 
\label{tab-smi-vit}
\small
\centering
\setlength{\tabcolsep}{3pt}
\begin{tabular}{cccccccccccccccc}
 \toprule
Method & Model &	ViT-Base&	ViT-Large&	ViT-Huge  \\
\midrule
\multirow{2}{*}{Clean}&	Acc(\%)	&	90.92	&91.35	&91.93\\
& MI &	0.7285	&0.7297	&0.7274 \\
\hline
\multirow{4}{*}{EM}& Acc(\%) & 27.35&	26.26&	25.87& \\
& Acc Gap(\%) &  -63.57&-65.09&-66.06 \\
& MI &  0.6531&	0.6449&	0.6370 \\
& MI Gap & 0.0754&-0.0848&-0.0904 \\
\hline
\multirow{4}{*}{REM}& Acc(\%) & 21.67&	21.37&	20.49\\
& Acc Gap(\%) &-69.25&-69.98&-71.44 \\
& MI &  		0.6427&	0.6241&	0.6183 \\
& MI Gap &-0.0858&-0.1056&-0.1091\\
\hline
\multirow{4}{*}{GUE}& Acc(\%) &17.72&	17.03&	16.58 \\
& Acc Gap(\%) &-73.20&-74.32&-75.35) \\
& MI & 		0.5988&	0.5923&	0.5806 \\
& MI Gap &-0.1297&-0.1374&-0.1468\\
\hline
\multirow{4}{*}{MI-UE}& Acc(\%) & 15.51&	14.43&	13.78\\
& Acc Gap(\%) &-75.41&-76.92&-78.15) \\
& MI &  	0.5332&	0.5296&	0.5180
 \\
& MI Gap &-0.1953&-0.2001&-0.2094
\\
\bottomrule
\end{tabular}
\end{table}

 Results in Tables \ref{tab-smi-resnet} and \ref{tab-smi-vit} show that, as the network become deeper, the drop of test accuracy (Acc Gap) and the MI reduction (MI Gap) always become bigger harmoniously, further validate the relationship between unlearnability and MI reduction. Our MI-UE achieves both greatest unlearnability (lowest Acc) as well as best MI reduction (Highest MI Gap).

\subsection{Rationality of Assumption for Gaussian Mixture Distribution}
\label{app:guass}
From an empirical perspective, modern neural networks in feature space, after normalization operations such as BatchNorm/LayerNorm, often present a nearly isotropic distribution, which has a similar covariance structure to the Gaussian distribution~\citep{daneshmand2021batch}. Even if the actual distribution deviates from Gaussian (with $\epsilon$ KL divergence gap), the upper bound given by Theorem \ref{th-gauss} is still a function of $\Sigma_Y$, and our MI-UE actually optimizes by maximizing the cosine similarity of similar features to compress the intra-class covariance. Therefore, MI-UE doesn’t rely on strict Gaussianity, as long as the inter-class/intra-class covariance is controllable, it can play a role in MI reduction and achieving unlearnability.

\section{Additional Experiments}
\label{app-add-exp}

\subsection{One-Class Unlearnable Examples}
In real-world scenarios,   the privacy protector sometimes only have access to their own class of data rather than the whole dataset,   for instance,   when people upload their selfies into the social media,   they can only modify their own images.
Therefore,   we also investigate various UEs when only one class of the whole dataset is perturbed. More precisely,   we only add perturbations to the class 0 of CIFAR-10 (i.e.,   the class "plane"),   and evaluate the "plane" class accuracy as well as accuracy of other classes.
Results provided in Table \ref{tab:one-cls-cifar10} reveal that,   existing one-class UEs can make the perturbed class be unlearnable,   while keeping the accuracy of other classes remain or become even higher. Furthermore,   our MI-UE shows best unlearnability on the poisoned class and keep decent performance on other classes. 
Therefore,   by only perturbing one class of dataset (10\% in CIFAR-10,   1\% in CIFAR-100 and ImageNet-subset),   UEs can make this class be unlearnable,   without effecting model's ability for other classes.

\begin{table}[t]
\centering
\caption{The unlearnable class accuracy and other classes accuracy of various one-class UEs on CIFAR-10. 
}
\small
\label{tab:one-cls-cifar10}
\setlength{\tabcolsep}{3pt}
\begin{tabular}{lccccccc}
\toprule
          Test Acc(\%)    & Unlearnable Class & Other Classes\\
\midrule
Clean & 95.9 & 94.3\\
EM  & 2.2 & 95.4 \\
AP & 0.2 & 95.3 \\
NTGA & 4.0 & 95.1 \\
AR  & 0.7 & 94.9 \\
SEM & 0.3 & 94.9 \\
REM & 0.5 & 95.6  \\
TUE & 0.1 & 95.0 \\
GUE & 0.3 & 95.1  \\
MI-UE & \textbf{0.0} & 95.1 \\
\bottomrule
\end{tabular}
\end{table}

\begin{table}[t]
\centering
\caption{The unlearnable class accuracy and other classes accuracy of various one-class UEs on CIFAR-100. 
}
\small
\label{tab:one-cls-cifar100}
\setlength{\tabcolsep}{3pt}
\begin{tabular}{lccccccc}
\toprule
          Test Acc(\%)    & Unlearnable Class & Other Classes\\
\midrule
Clean & 88 & 76.6\\
EM  & 8 & 76.7 \\
AP &  3 & 76.2 \\
NTGA & 26 & 77.0 \\
AR  &  90 & 77.1 \\
SEM & 18 & 75.9 \\
REM & 12 & 76.8  \\
TUE & 7 & 76.5 \\
GUE &  86 & 77.4  \\
MI-UE & \textbf{1} & 77.1  \\
\bottomrule
\end{tabular}
\end{table}

\begin{table}[t]
\small
\centering
\caption{The unlearnable class accuracy and other classes accuracy of our MI-UE on ImageNet-subset. 
}
\label{tab:one-cls-imagenet}
\setlength{\tabcolsep}{3pt}
\begin{tabular}{lccccccc}
\toprule
          Test Acc(\%)    & Unlearnable Class & Other Classes\\
\midrule
Clean & 92 & 80.31 \\
MI-UE & 16 & 72.61 \\
\bottomrule
\end{tabular}
\end{table}

Table \ref{tab:one-cls-cifar100} presents the experimental results of one-class UEs (only poisoned class 0) on CIFAR-100.
Similar to CIFAR-10,  our MI-UE still obtains the best unlearnabilty for targeted Class 0,   while keeping the accuracy on other classes.
It is worth noting that some UEs,   namely AR and GUE,   become ineffective when only poisoned one class on CIFAR-100. This may because CIFAR-100 has 100 classes,   poisoned one class for CIFAR-100 only give room for 1\% poisoned ratio,   making the UEs more difficult. Therefore,   some unstable UEs will fail to achieve their unlearnability.

Table \ref{tab:one-cls-imagenet} shows the performance of our MI-UE under one-class perturbations on ImageNet-subset. It displays similar trends from those on CIFAR-10/100,   our MI-UE can destroy the generalization of the poisoned class while maintain decent accuracy of other classes.

\begin{figure}[H]
    \centering
    \includegraphics[width=0.35\textwidth]{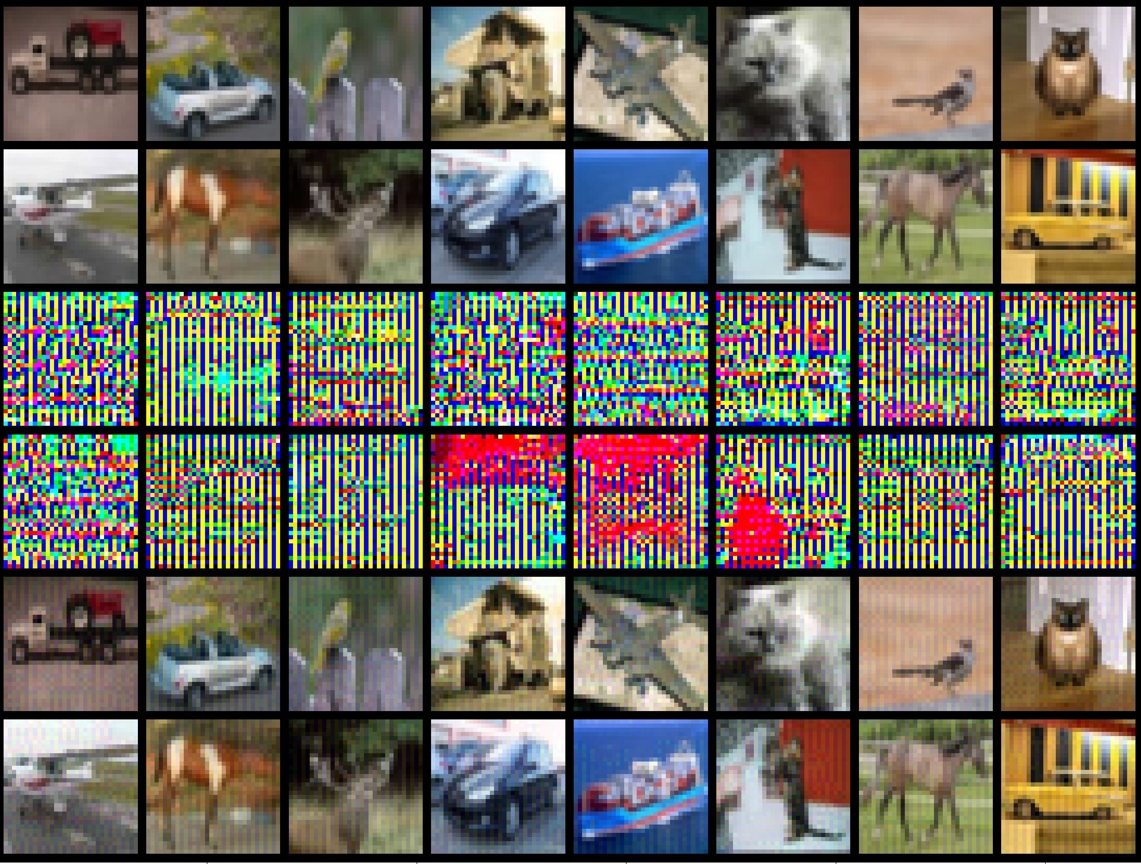}
    \caption{Visualization of MI-UE unlearnable noises and their corresponding clean and poisoned images on CIFAR-10. The first row is the clean images,   the second row is the MI-UE noises,   the last row is the poisoned images.}
    \label{fig:mi_ue_visual_cifar10}
\end{figure}

\subsection{Visualization}
We display some clean images,   MI-UE unlearnable noises and corresponding poisoned images of CIFAR-10 and ImageNet-subset dataset in Figures \ref{fig:mi_ue_visual_cifar10} and \ref{fig:mi_ue_visual_im100}. The MI-UE noises are normalized to $[0,  1]$ for visualization. It can be seen that MI-UE noises are relatively regular than random noises,   showing certain local isotropy.

\begin{figure}[H]
    \centering
    \includegraphics[width=0.35\textwidth]{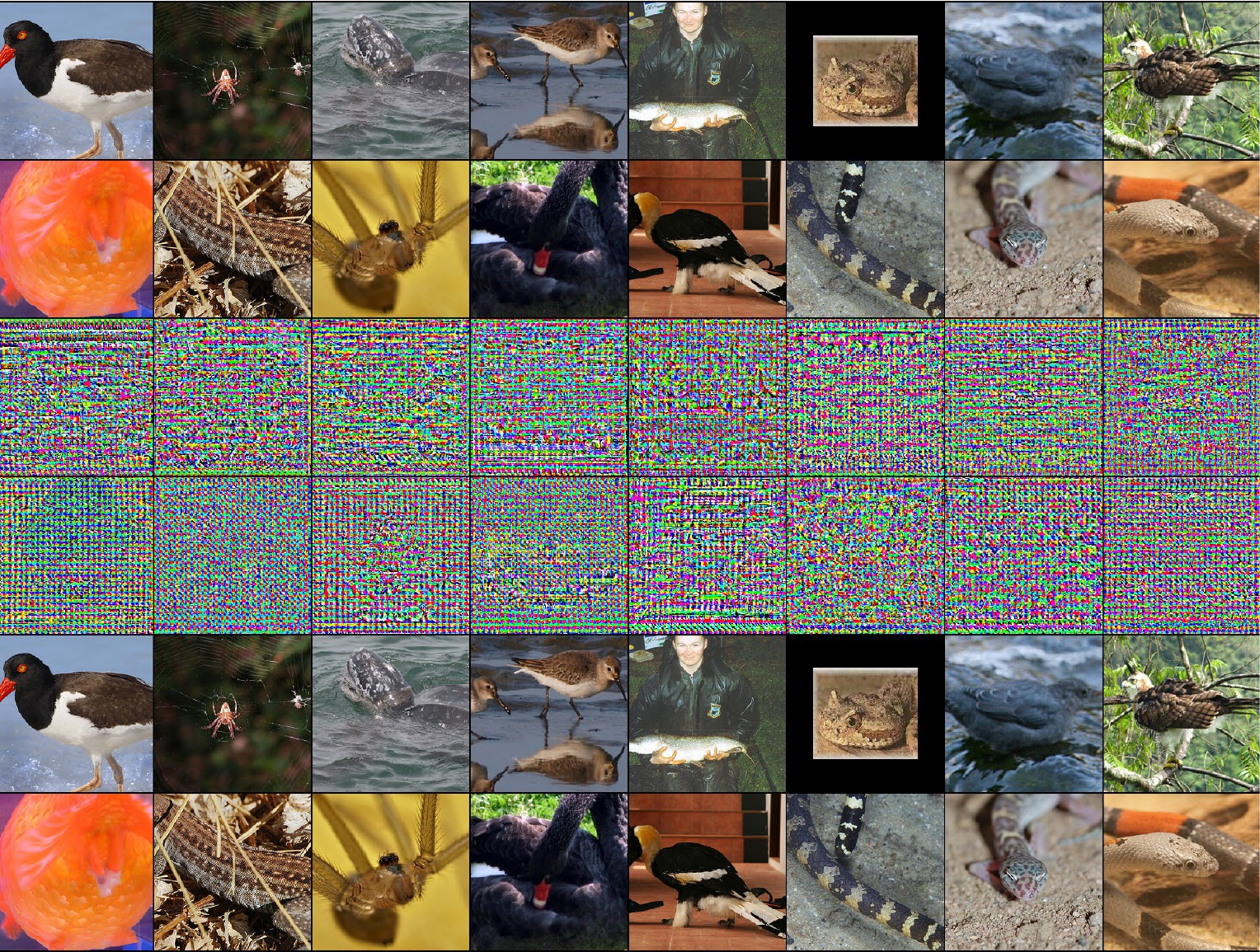}
    \caption{Visualization of MI-UE unlearnable noises and their corresponding clean and poisoned images on ImageNet-subset. The first row is the clean images,   the second row is the MI-UE noises,   the last row is the poisoned images.}
    \label{fig:mi_ue_visual_im100}
\end{figure}

\subsection{More Results on Transferability}
\label{app-transfer}
The test accuracy of various baseline UEs,   including EM,   AP,   NTGA,   AR,   REM,   SEM,   GUE and TUE,   corresponding with our proposed MI-UE on CIFAR-100 dataset are provided in Table \ref{tab:trans-cifar100}. 
The results show that MI-UE obtains superiority on ResNet-18,   ResNet-50,   DenseNet-121,   WRN34-10 and ViT-B.
Therefore,   MI-UE demonstrates the superior transferability across six modern deep networks,   leading to the corresponding UE most successful.

\begin{table}[h]
\centering
\caption{Test accuracy(\%) of various UEs under different victim models on CIFAR-100.
Our MI-UE achieves the lowest test accuracy compared to other UEs, indicating excellent poisoning effectiveness.
}
\setlength{\tabcolsep}{4pt}
\label{tab:trans-cifar100}
\begin{tabular}{lccccccccccc}
\toprule
         Model/Method   & Clean  & EM & AP & NTGA & AR & REM & SEM & GUE &TUE & MI-UE (ours) \\
\midrule
ResNet-18 & 76.65 & 2.09 & 3.73 & 3.08 & 6.19 & 7.52 & 6.29 & 22.79 & 1.34 & \textbf{1.17}\\
ResNet-50 & 78.25 & 2.14 & 4.51 & 5.21 & 9.05 & 7.63 & 4.55 & 23.51 & 3.91 & \textbf{1.72}\\
DenseNet-121 & 77.78 & 2.69 & 3.90 & 6.04 & 5.26 & 7.63 & 4.54 & 24.35 & 2.10 & \textbf{1.11} \\
WRN34-10 & 80.46 & 2.45 & 3.16 & 3.37 & 5.21 & 5.77 & 5.00 & 31.21 & 4.64 & \textbf{1.48}\\
ViT-B &  66.54 & 4.25 & 3.23 & 11.52 & 24.10 & 8.44 & 8.80 & 24.01 & 9.42 & \textbf{2.62}\\
\bottomrule
\end{tabular}
\end{table}

\subsection{MI-UE under Contrastive Learning}
Although our MI-UE is designed on supervised learning paradigm, we can also extend it to constrative learning paradigm directly. Inspired by \citep{wang2024efficient}, we incorporate stronger contrastive augmentation into our MI-UE, denoted as A-MI-UE. Results in Table \ref{tab:cl} show that the modified A-MI-UE outperforms both EM and contrastive UEs, TUE~\citep{ren2022transferable}, on SimCLR~\citep{chen2020simple}, achieving stronger contrastive unlearnability.

\begin{table}[H]
\centering
\caption{The performance of our A-MI-UE method compared with UE and TUE on SimCLR contrastive learning paradigm.
}
\label{tab:cl}
\setlength{\tabcolsep}{3pt}
\begin{tabular}{lccccccc}
\toprule
          SimCLR    & Clean & EM & TUE & A-MI-UE \\
\midrule
Test Acc(\%) & 91.71 & 89.54 & 56.32 & \textbf{52.84} \\
\bottomrule
\end{tabular}
\end{table}

\subsection{Linear Separability of Various Unlearnable Noises and Datasets}
We evaluate the linear separability of both unlearnable noises and unlearnable datasets for various existing methods. Specifically, we check the training accuracy of noise dataset $\{\epsilon_i,y_i\}_{i=1}^N$ fitted by the linear network as the result of Unlearnable Noises, and check the training accuracy of unlearnable dataset $\{x_i+\epsilon_i,y_i\}_{i=1}^N$ fitted by the linear network as the result of Unlearnable Datasets. It is noteworthy that we do not include any data augmentations when training the linear network, like Random Crop and Random Horizontal Flip, as suggested in \citet{zhu2024detection}. The results are shown in Table \ref{tab:linear-sep}. It reveals that although many unlearnable noises and datasets indeed have decent linear separability, some effective unlearnable methods like AP and AR demonstrate poor linear separability, especially for the AR method, their linear separability is almost approaching to the clean dataset. To quantify the correlation between linear separability and unlearnable power, we evaluate the Spearman correlation score between Acc gap with Training Acc on both Unlearnable Noises and Unlearnable Datasets, the score is 0.0333 and 0.2833 respectively, significantly lower than the score between Acc gap and MI gap, 0.7818.
\label{app:linear-sep}
\begin{table}[h]
\centering
\caption{The training accuracy of unlearnable noises and unlearnable datasets by the linear network. 
\label{tab:linear-sep}}
\small
\setlength{\tabcolsep}{3pt}

\begin{tabular}{lccccccc}
\toprule
          Training Acc(\%)    & Unlearnable Noises & Unlearnable Datasets\\
\midrule
Clean & --- & 47.99 \\
EM  & 99.32 & 99.37 \\
AP &  86.53 & 56.96 \\
NTGA & 99.94 & 95.02 \\
AR  & 42.09 & 48.13 \\
SEM & 96.66 & 83.01 \\
REM & 92.97 & 81.46 \\
TUE & 100.0 & 100.0 \\
GUE &  98.92 & 99.63  \\
MI-UE (ours) & 99.89 & 99.91\\
\bottomrule
\end{tabular}

\end{table}


\subsection{Various Training Step Sizes}
To further demonstrate the soundness of our proposed MI-UE, we conduct experiments on CIFAR-10/100 with various poison generation step sizes, including 0.1/255, 0.2/255 (default), 0.4/255 and 0.8/255. Results in Table \ref{tab:step-sizes} show that different step sizes do not affect the unlearnable power of MI-UE.

\begin{table}[H]
\centering
\caption{
Quantitative results(\%) of MI-UE with different poison generation step sizes. }
\setlength{\tabcolsep}{3pt}
\small
\label{tab:step-sizes}
\begin{tabular}{lccccccccccc}
\toprule
         Step size   & CIFAR-10 & CIFAR-100 \\
\midrule
0.1/255 & 9.93 & 1.13 \\
0.2/255 (default) & 9.95 & 1.17 \\
0.4/255 & 10.00 & 1.14 \\
0.8/255 & 9.96 & 1.09 \\
\bottomrule
\end{tabular}
\end{table}

\subsection{Computational Cost}
\label{comp-cost}
On CIFAR-10 and CIFAR-100,   the generation of MI-UE requires about 3.6 hours.
On ImageNet-subset,   the generation of MI-UE requires about 45 hours. All of the experiments are conducted on a single NVIDIA A800 GPU.

Due to incorporation of the similarity matrix of our MI reduction loss $\L_{\mi}$,   the generation of poisons results in computational overhead to a certain extent,   especially for larger-resolution datasets like ImageNet.

To mitigate the potential computational overheads, we test MI-UE with smaller poisoning epochs, 15 epochs and 30 epochs, for ImageNet-subset. Results are provided in Table \ref{tab:poi-epoch-2}. Compared with Table \ref{tab:sl-cifar10}, MI-UE under 30 poisoning epochs demonstrates the state-of-the-art unlearnability across existing UEs. Even for 15 poisoning epochs, MI-UE still achieves the second-best performance, slightly behind EM. The effectiveness of MI-UE under economic scenarios further demonstrate MI-UE’s real-world applications.

\begin{table}[h]
\centering
\caption{Test accuracy of MI-UE with different poisoning epochs for ImageNet-subset. 
}
\label{tab:poi-epoch-2}
\small
\setlength{\tabcolsep}{3pt}
\begin{tabular}{lccccccc}
\toprule
          Epochs    & Test Acc(\%) \\
\midrule
50 (baseline)  & 1.03 \\
30 & 1.09 \\
15  & 1.95 \\
\bottomrule
\end{tabular}
\end{table}

\subsection{Learning Process} 
\label{app-lr-process}
In this section,   we visualize the evolution of training and test accuracies of our MI-UE,   two representative baseline UEs,   EM and AP,   and two robust UEs,   REM and SEM, on CIFAR-10 dataset. Figure \ref{fig:sl_lr_process} shows the learning process at each epoch for standard training.
It suggests that the unlearnability of EM and REM are relatively weak,   the training accuracy for SEM is relatively unstable. AP and our MI-UE hold both stable learning process and decent performance for standard training,   and our MI-UE is slightly outperforming AP while displaying more stable test accuracy than AP at different epochs.

\begin{figure}[h]
    \centering
    \includegraphics[width=0.45\textwidth]{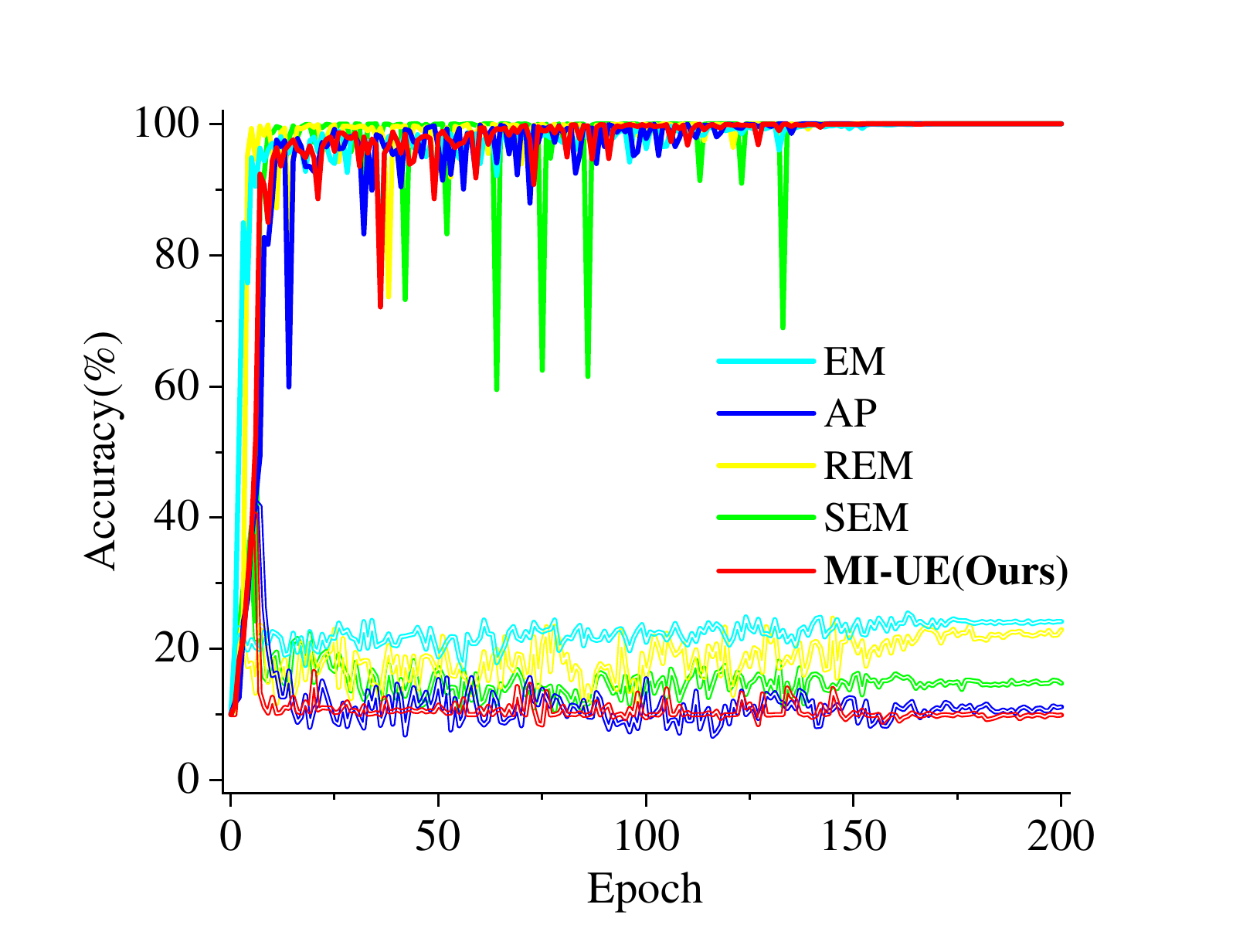}
    \caption{Learning process of standard training on EM,   AP,   REM,   SEM and our MI-UE unlearnable examples,   the solid lines represent the training accuracy,   while the hollow lines represent the test accuracy.}
    \label{fig:sl_lr_process}
\end{figure}

Meanwhile,   Figures \ref{fig:at_8_lr_process},   \ref{fig:at_6_lr_process}, \ref{fig:at_4_lr_process}, and  \ref{fig:at_2_lr_process} represents the learning process for adversarial training with defense budget be $8/255$,   $6/255$,   $4/255$ and $2/255$.
Figures \ref{fig:at_8_lr_process} and \ref{fig:at_6_lr_process} reveals the learning process under larger adversarial training budget,   suggesting the superiority of our MI-UE. The proposed robust UEs,   REM and SEM,   lose their unlearnability when the adversarial budget is larger than $1/2$ of poisoned budget. 
Figure \ref{fig:at_4_lr_process} represents the learning process under adversarial training budget be $1/2$,   in that case robust UEs (REM and SEM) work well,   but traditional UEs (EM and AP) still be poor. Our MI-UE achieves comparable unlearnability with existing state-of-the-art UE,   namely SEM,   showing strong stability of our method.
Figure \ref{fig:at_2_lr_process} represents learning process under smaller adversarial training budget,   in that case not only robust UEs,   but also traditional UEs have shown unlearnability. Our MI-UE keeps the unlearnabilty in this scenario,   achieve the best performance compared with existing methods.

\begin{figure}[h]
    \centering
    \includegraphics[width=0.45\textwidth]{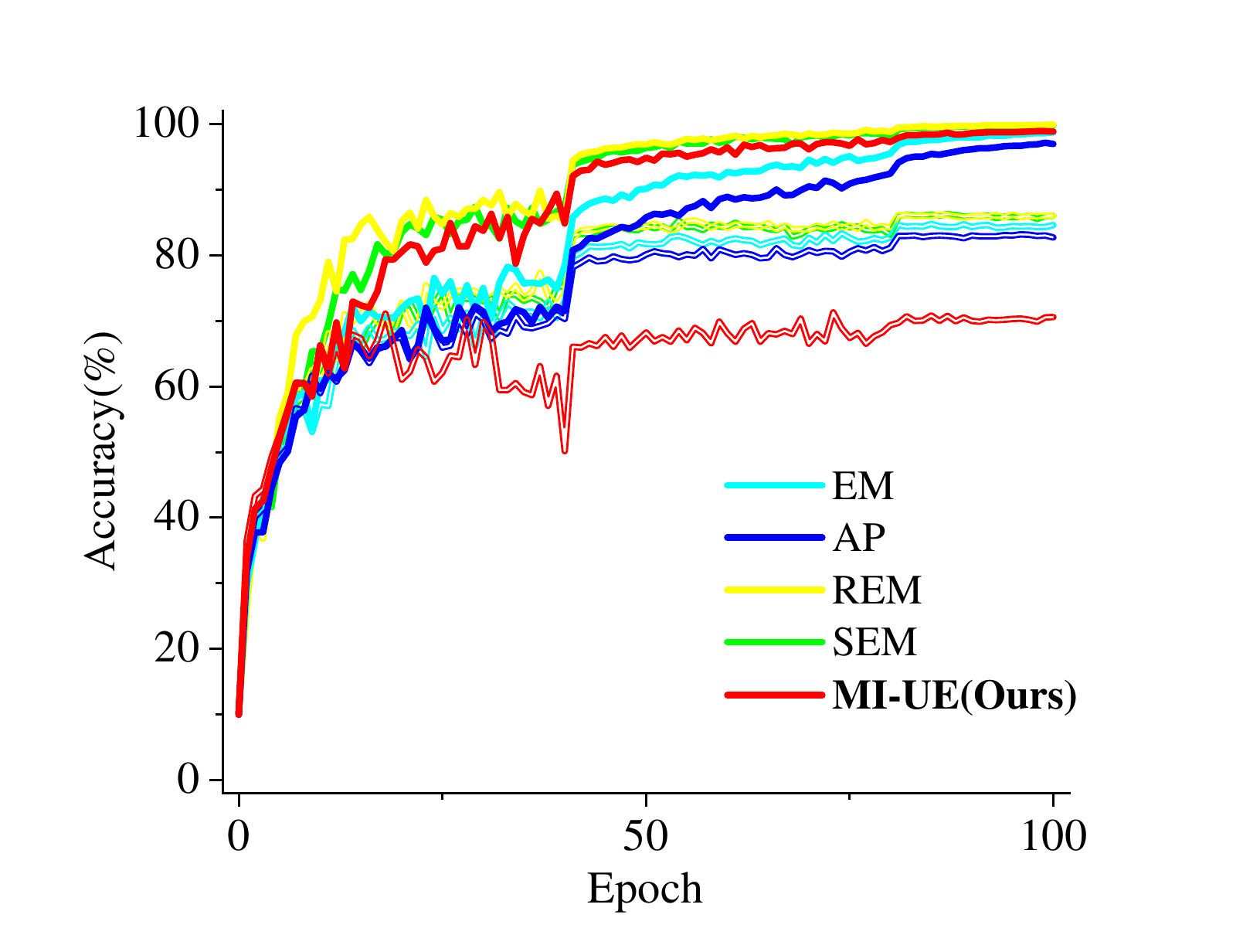}
    \caption{Learning process of adversarial training on EM,   AP,   REM,   SEM and our MI-UE unlearnable examples with the perturbation budget be $8/255$,   the solid lines represent the training accuracy,   while the hollow lines represent the test accuracy.}
    \label{fig:at_8_lr_process}
\end{figure}

\begin{figure}[h]
    \centering
    \includegraphics[width=0.45\textwidth]{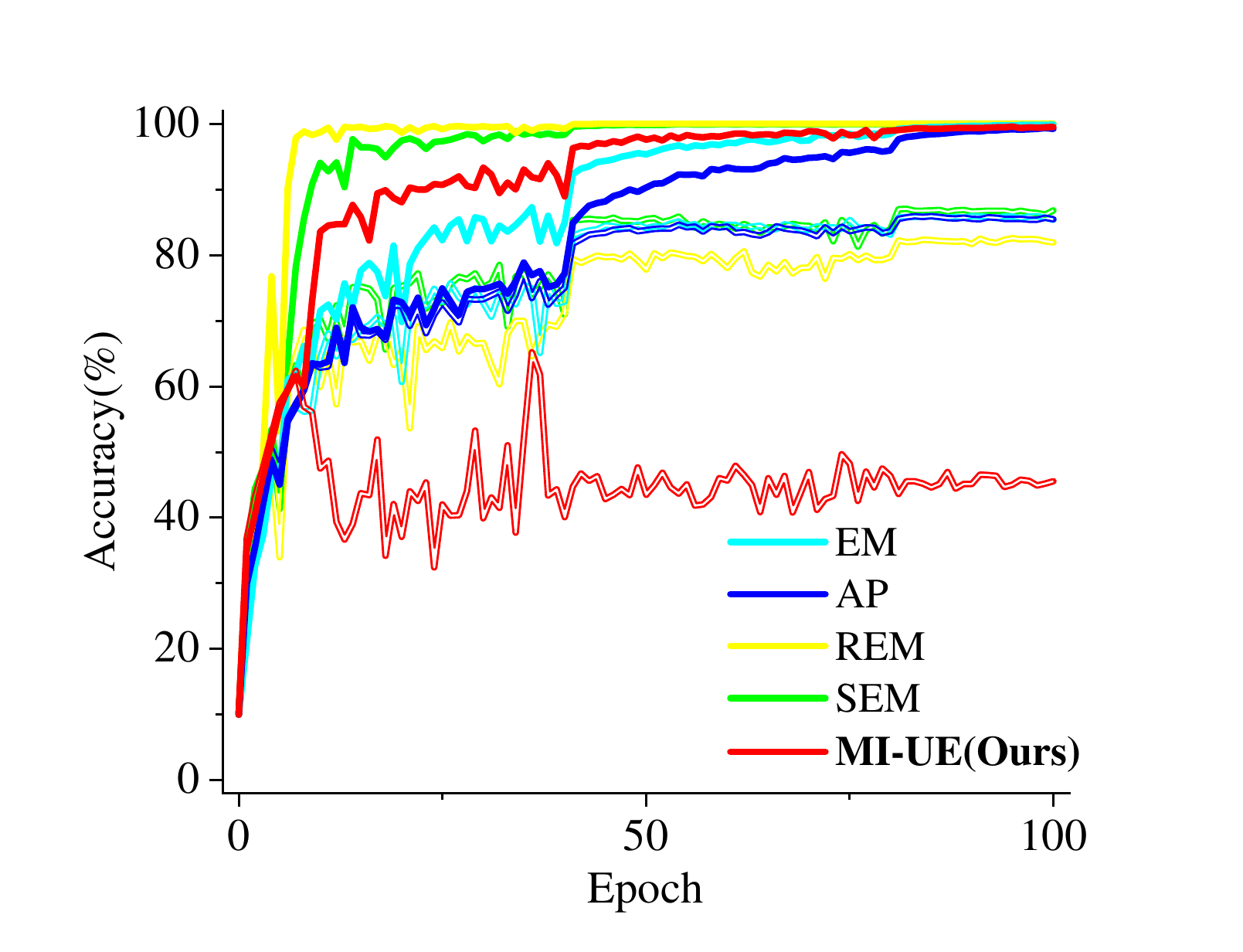}
    \caption{Learning process of adversarial training on EM,   AP,   REM,   SEM and our MI-UE unlearnable examples with the perturbation budget be $6/255$,   the solid lines represent the training accuracy,   while the hollow lines represent the test accuracy.}
    \label{fig:at_6_lr_process}
\end{figure}

\begin{figure}[H]
    \centering
    \includegraphics[width=0.45\textwidth]{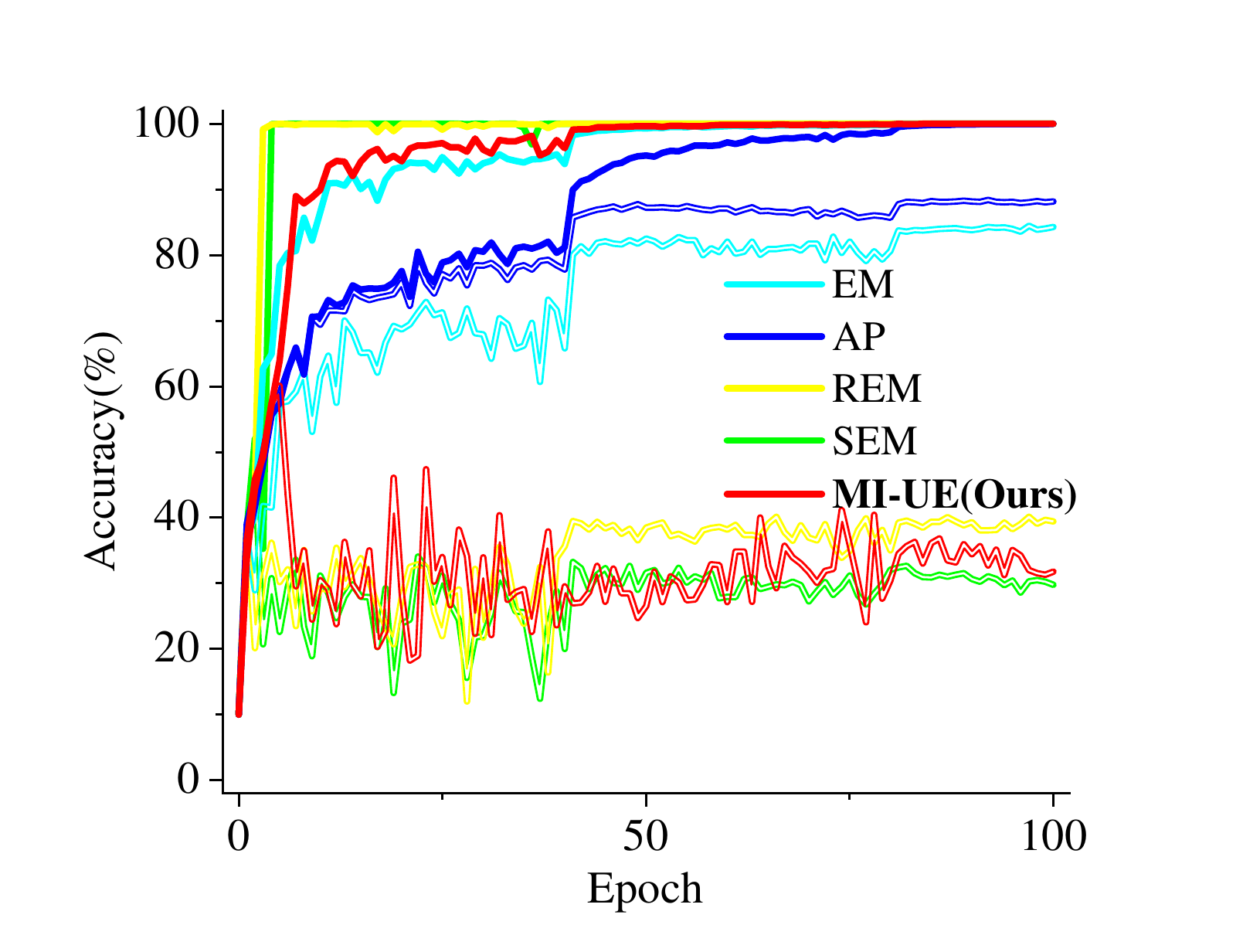}
    \caption{Learning process of adversarial training on EM,   AP,   REM,   SEM and our MI-UE unlearnable examples with the perturbation budget be $4/255$,   the solid lines represent the training accuracy,   while the hollow lines represent the test accuracy.}
    \label{fig:at_4_lr_process}
\end{figure}

\begin{figure}[H]
    \centering
    \includegraphics[width=0.45\textwidth]{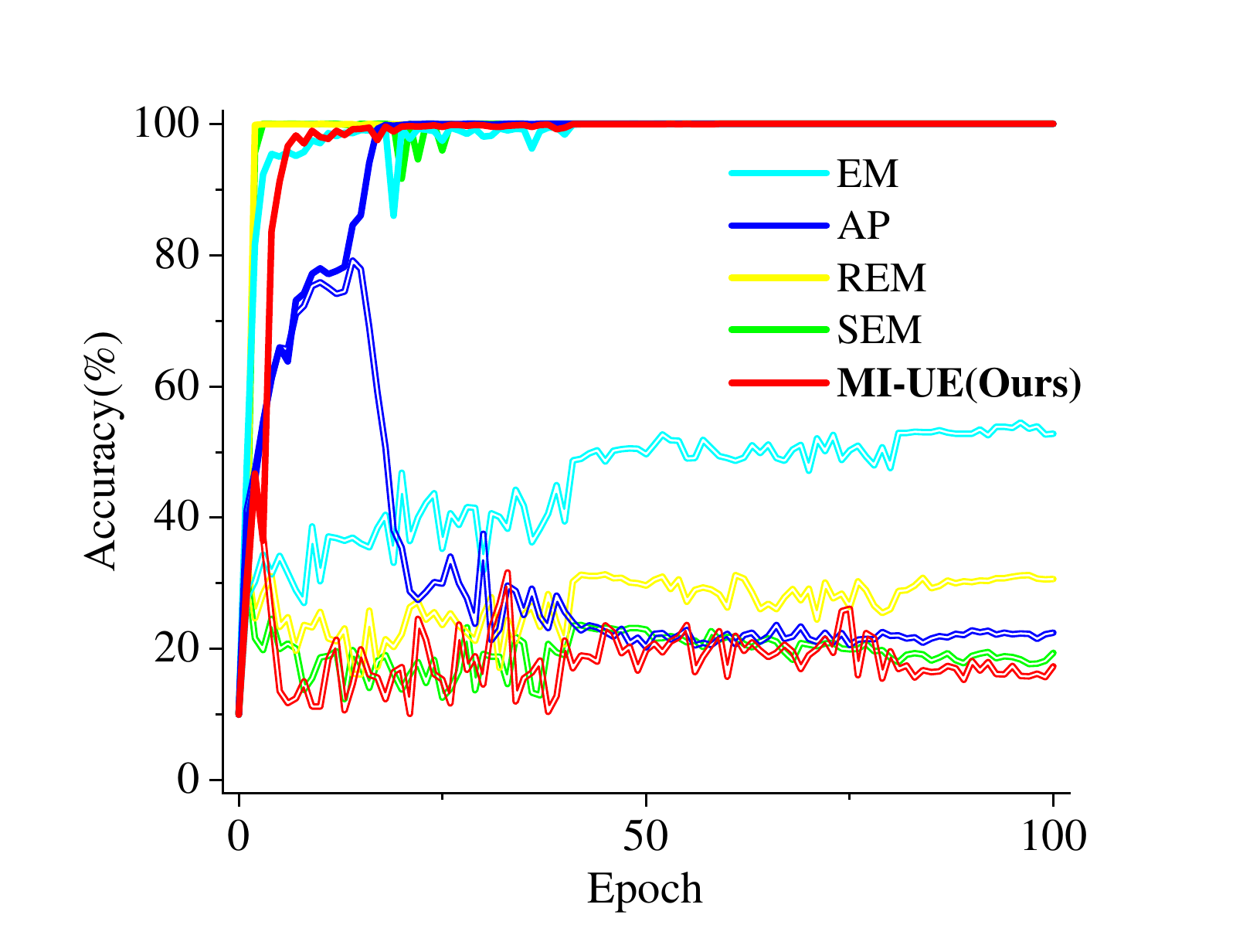}
    \caption{Learning process of adversarial training on EM,   AP,   REM,   SEM and our MI-UE unlearnable examples with the perturbation budget be $2/255$,   the solid lines represent the training accuracy,   while the hollow lines represent the test accuracy.}
    \label{fig:at_2_lr_process}
\end{figure}

\end{document}